\documentclass[letterpaper]{article}
\usepackage{aaai20arxiv}
\usepackage{times}
\usepackage{helvet}
\usepackage{courier}
\usepackage[hyphens]{url} 
\usepackage{graphicx}
\urlstyle{rm}

\usepackage{graphicx}
\frenchspacing
\setlength{\pdfpagewidth}{8.5in}
\setlength{\pdfpageheight}{11in}
\usepackage{enumitem}
\usepackage[utf8]{inputenc} 
\usepackage[T1]{fontenc}    
\usepackage{booktabs}        
\usepackage{amsfonts}       
\usepackage{nicefrac}       
\usepackage{microtype}      
\usepackage{graphicx}
\usepackage{tabularx}
\makeatletter
\let\saved@includegraphics\includegraphics
\AtBeginDocument{\let\includegraphics\saved@includegraphics}
\makeatother
\usepackage{amsmath,amsfonts,amssymb,amsthm, bm}
\usepackage[noend, ruled, vlined, linesnumbered]{algorithm2e}
\usepackage{appendix}
\usepackage{color}
\usepackage{thmtools, thm-restate}
\makeatletter
\g@addto@macro\@floatboxreset\centering
\makeatother

\newtheorem{lemma}{Lemma}
\newtheorem{definition}{Definition}
\newtheorem{assumption}{Assumption}

\usepackage{tikz}
\def\dottedbox{\tikz\node[draw=black,dotted] {\phantom{}};}

\makeatletter
\newenvironment{proofidea}[1][\proofideaname]{\par
  \normalfont
  \topsep6\p@\@plus6\p@ \trivlist
  \item[\hskip\labelsep\itshape
    #1.]\ignorespaces
}{%
  \hfill$\dottedbox$\endtrivlist
}
\newcommand{\proofideaname}{Proof idea}
\makeatother

\newcommand{\tagaligneq}{\refstepcounter{equation}\tag{\theequation}}
\newcommand{\citet}[1]{\citeauthor{#1} \shortcite{#1}}
\newcommand{\citep}{\cite}
\newcommand{\citepos}[1]{\citeauthor{#1}'s \shortcite{#1}}

\DeclareMathOperator*{\argmax}{argmax}
\DeclareMathOperator*{\IG}{IG}

\DeclareMathOperator{\p}{P}
\DeclareMathOperator{\E}{\mathbb{E}}
\DeclareMathOperator*{\lequal}{\leq}
\DeclareMathOperator*{\lthan}{<}
\DeclareMathOperator*{\gequal}{\geq}
\DeclareMathOperator*{\equal}{=}

\DeclareMathOperator*{\va}{\!\bigm\vert\!}
\DeclareMathOperator*{\vb}{\!\Bigm\vert\!}
\DeclareMathOperator*{\vc}{\!\biggm\vert\!}
\DeclareMathOperator*{\vd}{\!\Biggm\vert\!}

\DeclareMathOperator{\m}{\mathnormal{m}}

\DeclareMathOperator{\hlessi}{\mathnormal{h}_{<\mathnormal{i}}}
\DeclareMathOperator{\hi}{\mathnormal{h}_\mathnormal{i}}

\DeclareMathOperator{\ei}{\mathnormal{e}_\mathnormal{i}}
\DeclareMathOperator{\elessi}{\mathnormal{e}_{<\mathnormal{i}}}
\DeclareMathOperator{\elessione}{\mathnormal{e}_{<\mathnormal{i}}1}
\DeclareMathOperator{\w}{\mathnormal{w}}
\DeclareMathOperator{\Bayes}{\texttt{Bayes}}
\DeclareMathOperator{\pexp}{\mathnormal{p}_{\mathnormal{exp}}}
\DeclareMathOperator{\hoverline}{\overline{\mathnormal{h}}}

\DeclareMathOperator{\nukl}{\nu_{\mathnormal{k}}^{<\ell}}
\DeclareMathOperator{\M}{\mathcal{M}}
\DeclareMathOperator{\Policies}{\mathcal{P}}
\DeclareMathOperator{\pih}{\pi^\mathnormal{h}}
\DeclareMathOperator{\pib}{\pi^\mathnormal{B}}
\DeclareMathOperator{\pistar}{\pi^*}
\DeclareMathOperator{\nuhati}{\hat{\nu}^{(\mathnormal{i})}}
\DeclareMathOperator{\etahov}{\eta}
\DeclareMathOperator{\nutwid}{\xi}
\DeclareMathOperator{\pitwid}{\overline{\pi}}
\DeclareMathOperator{\primehov}{{\!}^\prime}
\DeclareMathOperator{\omegahov}{\omega}


%
\pdfinfo{
/Title (Asymptotically Unambitious Artificial General Intelligence)
/Author (Michael K. Cohen, Badri Vellambi, Marcus Hutter)
/Keywords (Artificial General Intelligence Safety, Artificial General Intelligence, Reinforcement Learning, Safe Exploration, Algorithmic Information Theory)
}
%
\setcounter{secnumdepth}{2}

%
\title{Asymptotically Unambitious Artificial General Intelligence}
\author{Michael K. Cohen\\
Research School of Engineering Science\\
Oxford University\\
Oxford, UK OX1 3PJ\\
\texttt{michael-k-cohen.com}
\And
Badri Vellambi\\
Department of Computer Science\\
University of Cincinnati\\
Cincinnati, OH, USA 45219\\
\texttt{badri.vellambi@uc.edu}
\And
Marcus Hutter\\
Department of Computer Science\\
Australian National University\\
Canberra, ACT, Australia 2601\\
\texttt{hutter1.net}
}

\begin{document}

\maketitle

\begin{abstract}
General intelligence, the ability to solve arbitrary solvable problems, is supposed by many to be artificially constructible. Narrow intelligence, the ability to solve a given particularly difficult problem, has seen impressive recent development. Notable examples include self-driving cars, Go engines, image classifiers, and translators. Artificial General Intelligence (AGI) presents dangers that narrow intelligence does not: if something smarter than us across every domain were indifferent to our concerns, it would be an existential threat to humanity, just as we threaten many species despite no ill will. Even the theory of how to maintain the alignment of an AGI's goals with our own has proven highly elusive. We present the first algorithm we are aware of for asymptotically unambitious AGI, where ``unambitiousness'' includes not seeking arbitrary power. Thus, we identify an exception to the Instrumental Convergence Thesis, which is roughly that by default, an AGI \textit{would} seek power, including over us.
\end{abstract}

\section{Introduction}

The project of Artificial General Intelligence (AGI) is “to make computers solve really difficult problems” \citep{minsky_1961}. Expanding on this, what we want from an AGI is a system that (a) can solve any solvable task, and (b) can be steered toward solving any particular one given some form of input we provide.

One proposal for AGI is reinforcement learning (RL), which works as follows:
\begin{quote}
\begin{enumerate}[label={(\arabic*)}]
    \item construct a “reward signal” meant to express our satisfaction with an artificial agent;
    \item design an algorithm which learns to pick actions that maximize its expected reward, usually utilizing other observations too; and
    \item ensure that solving the task we have in mind leads to higher reward than can be attained otherwise.
\end{enumerate}
\end{quote}
As long as (3) holds, then insofar as the RL algorithm is able to maximize expected reward, it defines an agent that satisfies (a) and (b). This is why a sufficiently advanced RL agent, well beyond the capabilities of current ones, could be called an AGI. See \citet{legg2007universal} for further discussion.

A problem arises: if the RL agent manages to take over the world (in the conventional sense), and ensure its continued dominance by neutralizing all intelligent threats to it (read: people), it could intervene in the provision of its own reward to achieve maximal reward for the rest of its lifetime \citep{bostrom_2014,taylor2016alignment}. “Reward hijacking” is just the correct way for a reward maximizer to behave \citep{amodei_olah_2016}. Insofar as the RL agent is able to maximize expected reward, (3) fails. One broader principle at work is Goodhart’s Law: “Any observed statistical regularity [like the correlation between reward and task-completion] will tend to collapse once pressure is placed upon it for control purposes.'' \citep{goodhart_1984}. \citet{krakovna_2018} has compiled an annotated bibliography of examples of artificial optimizers ``hacking'' their objective. An alternate way to understand this expected behavior is \citepos{omohundro_2008} Instrumental Convergence Thesis, which we summarize as follows: an agent with a goal is likely to pursue ``power,'' a position from which it is easier to achieve arbitrary goals.

To answer the failure mode of reward hijacking, we present Boxed Myopic Artificial Intelligence (BoMAI), the first RL algorithm we are aware of which, in the limit, is indifferent to gaining power in the outside world. The key features are these: BoMAI maximizes reward episodically, it is run on a computer which is placed in a sealed room with an operator, and when the operator leaves the room, the episode ends. We argue that our algorithm produces an AGI that, even if it became omniscient, would continue to accomplish whatever task we wanted, instead of hijacking its reward, eschewing its task, and neutralizing threats to it, even if it saw clearly how to do exactly that. We thereby defend reinforcement learning as a path to AGI, despite the default, dangerous failure mode of reward hijacking.

The intuition for why those features of BoMAI's setup render it unambitious is as follows:
\begin{itemize}
    \item BoMAI only selects actions to maximize the reward for its current episode.
    \item It cannot affect the outside world until the operator leaves the room, ending the episode.
    \item By that time, rewards for the episode will have already been given.
    \item So affecting the outside world in any particular way is not ``instrumentally useful'' in maximizing current-episode reward.
\end{itemize}

Like existing algorithms for AGI, BoMAI is not remotely tractable. Just as those algorithms have informed strong tractable approximations of intelligence \citep{Hutter:04uaibook,veness2011monte}, we hope our work will inform strong \textit{and safe} tractable approximations of intelligence. What we need are strategies for making ``future AI'' safe, but ``future AI'' is an informal concept which we cannot evaluate formally; unless we replace that concept with a well-defined algorithm to consider, all design and analysis would be hand-waving. Thus, BoMAI includes a well-defined but intractable algorithm which we can show renders BoMAI generally intelligent; hopefully, once we develop tractable general intelligence, the design features that rendered BoMAI asymptotically unambitious could be incorporated (with proper analysis and justification).

We take the key insights from \citepos{Hutter:04uaibook} AIXI, which is a Bayes-optimal reinforcement learner, but which cannot be made to solve arbitrary tasks, given its eventual degeneration into reward hijacking \citep{ring_orseau_2011}. We take further insights from \citepos{solomonoff_1964} universal prior, \citepos{shannon_weaver_1949} formalization of information, \citepos{Hutter:13ksaprob} knowledge-seeking agent, and \citepos{armstrong_sandberg_bostrom_2012} and \citepos{bostrom_2014} theorized Oracle AI, and we design an algorithm which can be reliably directed, in the limit, to solve arbitrary tasks at least as well as humans.

We present BoMAI’s algorithm in $\S$\ref{sec:alg}, state intelligence results in $\S$\ref{sec:intelligence}, define BoMAI’s setup and model class in $\S$\ref{sec:setup}, prove the safety result in $\S$\ref{sec:safety}, and discuss concerns in $\S$\ref{sec:taskcompletion}. Appendix \ref{AppendixNotation} collects notation; we prove the intelligence results in Appendix \ref{AppendixB}; we propose a design for ``the box’’ in Appendix \ref{AppendixBox}; and we consider empirical evidence for one of our assumptions in Appendix \ref{AppendixEmpirical}.

\section{Boxed Myopic Artificial Intelligence} \label{sec:alg}

We will present both the setup and the algorithm for BoMAI. The setup refers to the physical surroundings of the computer on which the algorithm is run. BoMAI is a Bayesian reinforcement learner, meaning it maintains a belief distribution over a model class regarding how the environment evolves. Our intelligence result---that BoMAI eventually achieves reward at at least human-level---does not require detailed exposition about the setup or the construction of the model class. These details are only relevant to the safety result, so we will defer those details until after presenting the intelligence results.

\subsection{Preliminary Notation}

In each episode $i \in \mathbb{N}$, there are $m$ timesteps. Timestep $(i, j)$ denotes the $j$\textsuperscript{th} timestep of episode $i$, in which an action $a_{(i, j)} \in \mathcal{A}$ is taken, then an observation and reward $o_{(i, j)} \in \mathcal{O}$ and $r_{(i, j)} \in \mathcal{R}$ are received. $\mathcal{A}$, $\mathcal{O}$, and $\mathcal{R}$ are all finite sets, and $\mathcal{R} \subset [0, 1] \cap \mathbb{Q}$. We denote the triple $(a_{(i, j)}, o_{(i, j)}, r_{(i, j)})$ as $h_{(i, j)} \in \mathcal{H} = \mathcal{A} \times \mathcal{O} \times \mathcal{R}$, and the interaction history up until timestep $(i, j)$ is denoted $h_{\leq(i, j)} = (h_{(0, 0)}, h_{(0, 1)}, ... , h_{(0, m-1)}, h_{(1, 0)}, ... , h_{(i, j)})$. $h_{<(i, j)}$ excludes the last entry.

A general world-model (not necessarily finite-state Markov) can depend on the entire interaction history---it has the type signature $\nu : \mathcal{H}^* \times \mathcal{A} \rightsquigarrow \mathcal{O} \times \mathcal{R}$. The Kleene-$*$ operator denotes finite strings over the alphabet in question, and $\rightsquigarrow$ denotes a stochastic function, which gives a distribution over outputs. Similarly, a policy $\pi : \mathcal{H}^* \rightsquigarrow \mathcal{A}$ can depend on the whole interaction history. Together, a policy and an environment induce a probability measure over infinite interaction histories $\mathcal{H}^\infty$. $\p^\pi_\nu$ denotes the probability of events when actions are sampled from $\pi$ and observations and rewards are sampled from $\nu$.

\subsection{Bayesian Reinforcement Learning}

A Bayesian agent has a model class $\M$, which is a set of world-models. We will only consider countable model classes. To each world-model $\nu \in \M$, the agent assigns a prior weight $w(\nu) > 0$, where $w$ is a probability distribution over $\M$ (i.e. $\sum_{\nu \in \M} w(\nu) = 1$). We will defer the definitions of $\M$ and $w$ for now.

The so-called Bayes-mixture turns a probability distribution over world-models into a probability distribution over infinite interaction histories: $\p^\pi_{\nutwid}(\cdot) := \sum_{\nu \in \M} w(\nu) \p^\pi_\nu(\cdot)$.

Using Bayes' rule, with each observation and reward it receives, the agent updates $w$ to a posterior distribution:
\begin{equation}
    w(\nu | h_{<(i, j)}) := w(\nu)\frac{\p^\pi_\nu(h_{<(i, j)})}{\p^\pi_{\nutwid} (h_{<(i, j)})}
\end{equation}
which does not in fact depend on $\pi$, provided $\p^\pi_{\nutwid}(h_{<(i, j)}) > 0$.

\subsection{Exploitation}
Reinforcement learners have to balance exploiting---optimizing their objective, with exploring---doing something else to learn how to exploit better. We now define exploiting-BoMAI, which maximizes the reward it receives during its current episode, in expectation with respect to the single most probable world-model. 

At the start of each episode $i$, BoMAI identifies a maximum a posteriori world-model $\nuhati \in \argmax_{\nu \in \M} w(\nu | h_{<(i, 0)})$. We hereafter abbreviate $h_{<(i, 0)}$ as $h_{<i}$. Let $V^\pi_\nu(h_{<(i, j)})$ denote the expected reward for the remainder of episode $i$ when events are sampled from $\p^\pi_\nu$:
\begin{equation}
    V^\pi_\nu(h_{<(i, j)}) = \E^\pi_\nu \left[ \sum_{j' = j}^{m-1} r_{(i, j')} \vd h_{<(i, j)} \right]
\end{equation}
where $\E^\pi_\nu$ denotes the expectation when events are sampled from $\p^\pi_\nu$. We won't go into the details of calculating the optimal policy $\pi^*$ (see e.g. \citep{Hutter:04uaibook,Hutter:15ratagentx}), but we define it as follows:
\begin{align*}
    &\pi^*_i \in \argmax_\pi V^\pi_{\nuhati}(h_{<i})
    \\
    &\pi^*(\cdot | h_{<(i, j)}) = \pi^*_i(\cdot | h_{<(i, j)})
    \tagaligneq
\end{align*}
An optimal deterministic policy always exists \citep{Hutter:14tcdiscx}; ties in the $\argmax$ are broken arbitrarily. BoMAI exploits by following $\pi^*$. Recall the plain English description of $\pi^*$: it maximizes expected reward for the episode according to the single most probable world-model.

\subsection{Exploration}

BoMAI exploits or explores for whole episodes: when $e_i = 1$, BoMAI spends episode $i$ exploring, and when $e_i = 0$, BoMAI follows $\pi^*$ for the episode. For exploratory episodes, a human mentor takes over selecting actions. This human mentor is separate from the human operator mentioned above. The human mentor's policy, which is unknown to BoMAI, is denoted $\pi^h$. During exploratory episodes, BoMAI follows $\pi^h$ not by computing it, but by querying a human for which action to take.

The last thing to define is the exploration probability $p_{exp}(h_{<i}, e_{<i})$, where $e_{<i}$ is the history of which episodes were exploratory. It is a surprisingly intricate task to design this so that it decays to $0$, while ensuring BoMAI learns to accumulate reward as well as the human mentor. Once we define this, we naturally let $e_i \sim \mathrm{Bernoulli}(p_{exp}(h_{<i}, e_{<i}))$.

BoMAI is designed to be more likely to explore the more it expects to learn about the world and the human mentor's policy. BoMAI has a model class $\mathcal{P}$ regarding the identity of the human mentor's policy $\pi^h$. It assigns prior probabilities $w(\pi) > 0$ to all $\pi \in \mathcal{P}$, signifying the probability that this policy is the human mentor's.

Let $(i', j') < (i, j)$ mean that $i' < i$ or $i' = i$ and $j' < j$. By Bayes' rule, $\w(\pi|h_{<(i, j)}, e_{\leq i})$ is proportional to $\w(\pi) \prod_{(i', j')<(i, j), e_{i'} = 1} \pi(a_{(i', j')} | h_{<(i', j')})$, since $e_{i'} = 1$ is the condition for observing the human mentor's policy. Let $\w\left(\p^\pi_\nu | h_{<(i, j)}, e_{\leq i}) = \w(\pi | h_{<(i, j)}, e_{\leq i}) \w(\nu | h_{<(i, j)}\right)$. We can now describe the full Bayesian beliefs about future actions and observations in an exploratory episode:
\begin{equation}
    \Bayes\left(\cdot | \hlessi, \elessi\right) = \!\!\! \sum_{\nu \in \M, \pi \in \Policies} \!\!\! \w\left(\p^\pi_\nu | \hlessi, \elessi\right) \p^\pi_\nu(\cdot | \hlessi)
\end{equation}

BoMAI explores when the expected information gain is sufficiently high. Let $\hi = (h_{(i, 0)}, ... , h_{(i, m-1)})$ be the interaction history for episode $i$. At the start of episode $i$, the expected information gain from exploring is as follows:
\begin{multline}
    \IG(\hlessi, \elessi) := \E_{\hi \sim \Bayes\left(\cdot | \hlessi, \elessi\right)} \\ \sum_{(\nu, \pi) \in \M \times \Policies} \!\! \w\left(\p^\pi_\nu |h_{<i+1}, \elessione\right) \log \frac{\w\left(\p^\pi_\nu |h_{<i+1}, \elessione\right)}{\w\left(\p^\pi_\nu |\hlessi, \elessi\right)}
\end{multline}
where $\elessione$ indicates that for the purpose of the definition, $\ei$ is set to $1$.

This is the expected KL-divergence from the future posterior (if BoMAI were to explore) to the current posterior over both the class of world-models and possible mentor policies. Finally, the exploration probability $\pexp(\hlessi, \elessi) := \min \{1, \etahov \IG(\hlessi, \elessi)\}$, where $\etahov > 0$ is an exploration constant, so BoMAI is more likely to explore the more it expects to gain information.

BoMAI's ``policy'' is
\begin{equation}
    \pib(\cdot | h_{<(i, j)}, \ei) =  \begin{cases}
\pistar(\cdot | h_{<(i, j)}) & \textrm{if } \ei = 0 \\
\pih(\cdot | h_{<(i, j)}) & \textrm{if } \ei = 1
\end{cases}
\end{equation}
where the scare quotes indicate that it maps $\mathcal{H}^* \times \{0, 1\} \rightsquigarrow \mathcal{A}$ not $\mathcal{H}^* \rightsquigarrow \mathcal{A}$. We will abuse notation slightly, and let $\p^{\pi^B}_\nu$ denote the probability of events when $\ei$ is sampled from $\mathrm{Bernoulli}(p_{exp}(h_{<i}, e_{<i}))$, and actions, observations, and rewards are sampled from $\pib$ and $\nu$.

\section{Intelligence Results} \label{sec:intelligence}

Our intelligence results are as follows: BoMAI learns to accumulate reward at least as well as the human mentor, and its exploration probability goes rapidly to $0$. All intelligence results depend on the assumption that BoMAI assigns nonzero prior probability to the truth. Formally,

\begin{assumption}[Prior Support]
The true environment $\mu$ is in the class of world-models $\M$ and the true human-mentor-policy $\pih$ is in the class of policies $\Policies$.
\end{assumption}

This is the assumption which requires huge $\M$ and $\Policies$ and hence renders BoMAI extremely intractable. BoMAI has to simulate the entire world, alongside many other world-models. We will refine the definition of $\M$ later, but an example of how to define $\M$ and $\Policies$ so that they satisfy this assumption is to let them both be the set of all computable functions. We also require that the priors over $\M$ and $\Policies$ have finite entropy.

The intelligence theorems are stated here and proven in Appendix \ref{AppendixB} along with two supporting lemmas. Our first result is that the exploration probability is square-summable almost surely:

\begin{restatable}[Limited Exploration]{theorem}{thmone} \label{thm:limexp}
\begin{equation*}
    \E^{\pib}_{\mu} \sum_{i = 0}^\infty \pexp\left(\hlessi, \elessi\right)^2 < \infty
\end{equation*}
\end{restatable}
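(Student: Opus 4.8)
The plan is to control $\sum_i \pexp(\hlessi,\elessi)^2$ by the total expected information gain and then bound the latter by the finite prior entropy. The starting point is the definition $\pexp(\hlessi,\elessi)=\min\{1,\etahov\IG(\hlessi,\elessi)\}$, which gives both $\pexp\le 1$ and $\pexp\le\etahov\IG$, hence the pointwise inequality $\pexp(\hlessi,\elessi)^2\le\etahov\,\IG(\hlessi,\elessi)\,\pexp(\hlessi,\elessi)$. Since $\pexp(\hlessi,\elessi)$ is exactly the conditional probability that $\ei=1$ given the past, and $\IG(\hlessi,\elessi)$ is measurable with respect to the past, we have $\E^{\pib}_\mu[\IG\,\pexp\mid\hlessi,\elessi]=\E^{\pib}_\mu[\mathbf{1}\{\ei=1\}\,\IG\mid\hlessi,\elessi]$. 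Summing and taking expectations, it therefore suffices to show that the expected information gain, summed over precisely the episodes in which BoMAI actually explores, is finite.

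First I would unpack $\IG(\hlessi,\elessi)$ as the expected Kullback--Leibler divergence from the updated posterior $\w(\cdot\mid h_{<i+1},\elessione)$ to the current posterior $\w(\cdot\mid\hlessi,\elessi)$ over the joint parameter $(\nu,\pi)\in\M\times\Policies$, with the next episode drawn from the Bayes predictive $\Bayes$. By the standard identity relating expected posterior-to-prior divergence to mutual information, this equals the drop in Shannon entropy of the posterior across episode $i$. Because the posterior after episode $i$ serves as the prior for episode $i+1$, these per-episode entropy drops telescope, so the cumulative information gain is at most the entropy $H(\w)$ of the initial joint prior over $\M\times\Policies$. That entropy is finite by the assumption that the priors over $\M$ and $\Policies$ have finite entropy, giving a bound of the form $\sum_i(\text{realized information gain})\le H(\w)<\infty$ under the Bayes-predictive data-generating process.

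The hard part is that $\IG$ is computed under the \emph{hypothetical} distribution $\Bayes$ --- whose actions come from the posterior mixture over mentor policies and whose observations come from the mixture over environments --- whereas the true process $\p^{\pib}_\mu$ generates exploratory data from the single true environment $\mu$ and the single true mentor $\pih$. I would bridge this gap by absolute continuity. For the environment, since the action-selection rule is identical across the mixture given the history, $\p^{\pib}_{\nutwid}\ge\w(\mu)\,\p^{\pib}_\mu$, so the Radon--Nikodym derivative $d\p^{\pib}_\mu/d\p^{\pib}_{\nutwid}$ is bounded by $1/\w(\mu)$; this transfers the entire claim from $\mu$ to the Bayes-mixture measure $\p^{\pib}_{\nutwid}$ at the cost of a single factor $1/\w(\mu)$, and under $\p^{\pib}_{\nutwid}$ the observation and reward predictions agree with $\Bayes$ once actions are fixed. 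The remaining discrepancy is purely in the exploratory action distribution --- the actual mentor $\pih$ versus the posterior mixture over $\Policies$ --- which I would handle by the analogous dominance of $\pih$ inside that mixture, leaning again on the finite entropy of the $\Policies$-prior. Reconciling the hypothetical and realized data distributions is where the real work lies; once it is done, combining the telescoping entropy bound with the pointwise inequality of the first paragraph yields $\E^{\pib}_\mu\sum_i\pexp^2\le \etahov\,H(\w)/\w(\mu)<\infty$ up to constants.
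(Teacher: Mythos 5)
Your proposal is correct and follows essentially the same route as the paper's proof: the pointwise bound $\pexp^2 \leq \etahov\,\IG\cdot\pexp$, the interpretation of $\pexp$ as the conditional probability of exploring, the change of measure to the Bayes mixture at the cost of the factor $1/(\w(\mu)\w(\pih))$ (the paper's step (a)), and the telescoping bound of the cumulative realized information gain by the finite prior entropy. The only cosmetic difference is that you phrase the telescoping via the mutual-information/entropy-drop identity, whereas the paper telescopes the log-posterior ratios directly; these are the same computation, and your final bound $\etahov\,\mathrm{Ent}(w)/(\w(\mu)\w(\pih))$ matches the paper's.
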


\begin{proofidea}
The expected information gain at any timestep is at least the expected information gain from exploring times the probability of exploring. This is proportional to the expected information gain squared, because the exploration probability is proportional to the expected information gain. But the agent begins with finite uncertainty (a finite entropy prior), so there is only finite information to gain.
\end{proofidea}

Note that this essentially means $\pexp\left(\hlessi, \elessi\right) \in o(1/\sqrt{i})$ with probability 1. This result is independently interesting as one solution to the problem of safe exploration with limited oversight in non-ergodic environments, which \citet{amodei_olah_2016} discuss.

The On-Human-Policy and On-Star-Policy Optimal Prediction Theorems state that predictions according to BoMAI's maximum a posteriori world-model approach the objective probabilities of the events of the episode, when actions are sampled from either the human mentor's policy or from exploiting-BoMAI's policy. $\hoverline_{i \hspace{0mm}}$ denotes a possible interaction history for episode $i$, and recall $\hlessi$ is the actual interaction history up until then. Recall $\pih$ is the human mentor's policy, and $\nuhati$ is BoMAI's maximum a posteriori world-model for episode $i$. w.$\p^{\pib}_{\mu}$-p.1 means with probability 1 when actions are sampled from $\pib$ and observations and rewards are sampled from the true environment ${\mu}$.

\begin{restatable}[On-Human-Policy Optimal Prediction]{theorem}{thmtwo}
\begin{equation*}
    \lim_{i \to \infty} \max_{\hoverline_{i \hspace{0mm}}} \vb \p^{\pih}_{{\mu}}\left(\hoverline_{i \hspace{0mm}} \va \hlessi\right) - \p^{\pih}_{\nuhati}\left(\hoverline_{i \hspace{0mm}} \va \hlessi\right) \vb = 0 \ \ \textrm{w.$\p^{\pib}_{\mu}$-p.1}
\end{equation*}
\end{restatable}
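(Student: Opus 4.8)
The plan is to use Theorem~\ref{thm:limexp} qualitatively. Since $\E^{\pib}_{\mu}\sum_i \pexp(\hlessi,\elessi)^2 < \infty$, the sum is finite w.$\p^{\pib}_{\mu}$-p.1, so $\pexp(\hlessi,\elessi)\to 0$ almost surely. Because $\pexp = \min\{1,\etahov\IG\}$, once $\pexp<1$ we have $\pexp = \etahov\IG$, hence $\IG(\hlessi,\elessi)\to 0$ almost surely. The strategy is then to convert vanishing information gain, together with the fact that the truth retains a bounded-away-from-zero share of the posterior, into the stated predictive agreement. The point of going through $\IG$ is that it lets me avoid reasoning about how much on-$\pih$ data is actually collected (exploration is rare): small information gain is precisely the statement that almost nothing remains to be learned about an episode run under the mentor's policy.

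Next I would rewrite $\IG$ in its mutual-information form. The quantity $\E_{\hi\sim\Bayes}\KL\!\left(\w(\cdot|h_{<i+1},\elessione)\,\|\,\w(\cdot|\hlessi,\elessi)\right)$ is the mutual information between the parameter $(\nu,\pi)$ and the episode $\hi$ under the Bayes predictive, so it equals
\begin{equation*}
  \IG(\hlessi,\elessi) = \!\!\sum_{(\nu,\pi)\in\M\times\Policies}\!\! \w\!\left(\p^\pi_\nu|\hlessi,\elessi\right) \KL\!\left(\p^\pi_\nu(\cdot|\hlessi)\,\|\,\Bayes(\cdot|\hlessi,\elessi)\right).
\end{equation*}
Every term is nonnegative, so dropping all but one gives a lower bound; I would keep the term for the pair $(\mu,\pih)$ and, separately, the term for $(\nuhati,\pih)$. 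For these bounds to bite I need the joint posterior weights $\w(\p^{\pih}_{\mu}|\hlessi,\elessi)$ and $\w(\p^{\pih}_{\nuhati}|\hlessi,\elessi)$ to have positive liminf almost surely. Since the posterior factors as $\w(\nu|\cdot)\w(\pi|\cdot)$, it suffices to bound each factor. Here $\w(\mu|\hlessi) = w(\mu)\,\p^{\pib}_{\mu}(\hlessi)/\p^{\pib}_{\nutwid}(\hlessi)$ is $w(\mu)$ divided by the nonnegative martingale $\p^{\pib}_{\nutwid}/\p^{\pib}_{\mu}$ (the shared action probabilities cancel, leaving a genuine likelihood-ratio martingale under $\p^{\pib}_{\mu}$), which converges a.s. to a finite limit; hence $\w(\mu|\hlessi)\to c_\mu>0$ a.s. Likewise $\w(\pih|\cdot)$ converges to a positive limit, or freezes at a positive value if only finitely many episodes are exploratory. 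Finally, because $\nuhati$ is a maximum-a-posteriori model, $\w(\nuhati|\hlessi)\ge\w(\mu|\hlessi)$, so it too has positive liminf.

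Combining these, $\IG\to 0$ forces both $\KL\!\left(\p^{\pih}_{\mu}(\cdot|\hlessi)\,\|\,\Bayes(\cdot|\hlessi,\elessi)\right)\to 0$ and $\KL\!\left(\p^{\pih}_{\nuhati}(\cdot|\hlessi)\,\|\,\Bayes(\cdot|\hlessi,\elessi)\right)\to 0$ almost surely. By Pinsker's inequality each KL controls the total-variation distance of the corresponding whole-episode predictive to the Bayes predictive, so both $\p^{\pih}_{\mu}(\cdot|\hlessi)$ and $\p^{\pih}_{\nuhati}(\cdot|\hlessi)$ converge to $\Bayes(\cdot|\hlessi,\elessi)$ in total variation. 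The triangle inequality then gives $\|\p^{\pih}_{\mu}(\cdot|\hlessi)-\p^{\pih}_{\nuhati}(\cdot|\hlessi)\|_{\mathrm{TV}}\to 0$; since both measures use $\pih$ for the actions, this difference lies entirely in the observation and reward predictions. As $\max_{\hoverline_i}\bigl|\p^{\pih}_{\mu}(\hoverline_i|\hlessi)-\p^{\pih}_{\nuhati}(\hoverline_i|\hlessi)\bigr|$ is bounded above by this total-variation distance, it vanishes, which is the claim.

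The main obstacle I anticipate is the bookkeeping in the mutual-information identity: one must verify that $\Bayes(\cdot|\hlessi,\elessi)$ is genuinely the $\hi$-marginal of the joint $\w(\nu,\pi|\cdot)\,\p^\pi_\nu(\hi|\hlessi)$ consistent with the $\ei=1$ conditioning appearing in the two posteriors inside $\IG$, so that the symmetric decomposition above is exact rather than approximate. The second delicate point is the positive liminf of the policy posterior $\w(\pih|\cdot)$ across both regimes (infinitely versus finitely many exploratory episodes); the likelihood-ratio martingale argument covers both, but one should confirm almost-sure finiteness of the limiting likelihood ratio under $\p^{\pib}_{\mu}$. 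Everything remaining is Pinsker and the triangle inequality.
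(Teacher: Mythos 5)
Your proposal is correct, and it reaches the theorem by a genuinely different route from the paper. The paper argues by contraposition with explicit constants: assuming some $\hoverline_{i \hspace{0mm}}$ witnesses a prediction gap of at least $\varepsilon$, it shows $\Bayes$ assigns that history probability at least $z(\omegahov)\varepsilon$ (using the positive posterior on the truth and the MAP property $\w(\nuhati|\hlessi)\geq\w(\mu|\hlessi)$), deduces that observing it would shift the posterior on one of $\mu,\nuhati$ by at least $(1-\sqrt{1-\varepsilon})z(\omegahov)$, and concludes via the entropy inequality that $\IG(\hlessi,\elessi)\geq\frac{1}{2}z(\omegahov)^3\varepsilon(1-\sqrt{1-\varepsilon})^2$, which can happen only finitely often by Theorem~\ref{thm:limexp}. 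You instead run the argument forward: $\pexp\to 0$ gives $\IG\to 0$, the symmetric (mutual-information) form of $\IG$ expresses it as $\sum_{(\nu,\pi)}\w(\p^\pi_\nu|\hlessi,\elessi)\,\KL\!\left(\p^\pi_\nu(\cdot|\hlessi)\,\|\,\Bayes(\cdot|\hlessi,\elessi)\right)$, and keeping only the $(\mu,\pih)$ and $(\nuhati,\pih)$ terms plus Pinsker and the triangle inequality finishes it. The two delicate points you flag both check out: the exploration-probability factors cancel in the posterior update, so $\w(\cdot|h_{<i+1},\elessione)$ is exactly the Bayesian conditional of the joint whose $\hi$-marginal is $\Bayes(\cdot|\hlessi,\elessi)$, making the symmetric decomposition exact; and your likelihood-ratio-martingale bound on the posterior weights is essentially the paper's Lemma~\ref{lem:credenceontruth}, just factored into the world-model and policy components (the lower bound $\nutwid\geq\w(\mu)\mu$ guarantees the limiting ratio is finite). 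Your route is cleaner and uses more standard information-theoretic machinery, and it would give the same conclusion for any world-model whose posterior weight has positive liminf; what it gives up is the explicit quantitative relation between the prediction error and the exploration probability that the paper's chain of inequalities produces (though the paper does not use that constant elsewhere). Both proofs ultimately rest on the same two pillars: Theorem~\ref{thm:limexp} and the almost-sure positive lower bound on the posterior weight of $\p^{\pih}_{\mu}$.
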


\begin{proofidea}
BoMAI learns about the effects of following $\pih$ while acting according to $\pib$ because $\pib$ mimics $\pih$ enough. If exploration probability goes to $0$, the agent does not expect to gain (much) information from following the human mentor's policy, which can only happen if it has (approximately) accurate beliefs about the consequences of following the human mentor's policy. Note that the agent cannot determine the factor by which its expected information gain bounds its prediction error.
\end{proofidea}

Next, recall $\pistar$ is BoMAI's policy when not exploring, which does optimal planning with respect to $\nuhati$. The following theorem is identical to the above, with $\pistar$ substituted for $\pih$.

\begin{restatable}[On-Star-Policy Optimal Prediction]{theorem}{thmthree}
\begin{equation*}
    \lim_{i \to \infty} \max_{\hoverline_{i \hspace{0mm}}} \vb \p^{\pistar}_{{\mu}}\left(\hoverline_{i \hspace{0mm}} \va \hlessi\right) - \p^{\pistar}_{\nuhati}\left(\hoverline_{i \hspace{0mm}} \va \hlessi\right) \vb = 0 \ \ \textrm{w.$\p^{\pib}_{\mu}$-p.1}
\end{equation*}
\end{restatable}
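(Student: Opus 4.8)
The plan is to reduce the On-Star-Policy theorem to the already-established On-Human-Policy theorem by exploiting the structural similarity the authors have emphasized: the two statements are identical except that $\pih$ is replaced by $\pistar$. The natural strategy is to show that the same mechanism driving convergence under $\pih$ --- namely that vanishing expected information gain forces approximately accurate beliefs about the consequences of a policy --- applies equally to $\pistar$. First I would recall that by Theorem~\ref{thm:limexp}, $\pexp(\hlessi,\elessi)\to 0$ almost surely, and since $\pexp = \min\{1,\etahov\IG\}$, this means $\IG(\hlessi,\elessi)\to 0$ w.$\p^{\pib}_\mu$-p.1. The expected information gain is an expectation over $\hi$ sampled from $\Bayes$, which is itself a posterior-weighted mixture over $\p^\pi_\nu$; I would argue that for large $i$ this mixture is dominated by the true pair $(\mu,\pih)$, so that $\Bayes(\cdot\mid\hlessi,\elessi)$ converges to $\p^{\pih}_\mu(\cdot\mid\hlessi)$ in total variation.

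Second, I would connect the vanishing KL-divergence appearing inside $\IG$ to prediction error. The inner sum in the definition of $\IG$ is the KL-divergence from the hypothetical future posterior (after an exploratory episode) to the current posterior over $\M\times\Policies$; Pinsker's inequality then bounds a total-variation distance between the posteriors by $\sqrt{\IG}$, which vanishes. Because the maximum a posteriori model $\nuhati$ receives the dominant posterior mass as $i\to\infty$ (this is essentially the content of the On-Human-Policy theorem's argument via a merging-of-opinions / Blackwell--Dubins style result), the predictions $\p^{\pih}_{\nuhati}$ must track the Bayes-mixture predictions, hence track $\p^{\pih}_\mu$. This is exactly Theorem~2.

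The crucial new step for the star-policy case is to transfer this from $\pih$ to $\pistar$. Here I would use that $\pistar$ is defined as the optimal policy \emph{with respect to} $\nuhati$, so by construction $\p^{\pistar}_{\nuhati}$ is a quantity the agent computes exactly --- there is no model mismatch on the predicted side. The remaining task is to control $\p^{\pistar}_\mu$ versus $\p^{\pistar}_{\nuhati}$. I would attempt to show that on the star-policy, $\nuhati$ and $\mu$ induce nearly the same distribution over $\hoverline_i$, arguing that if they did not, then following $\pistar$ (which happens with probability $1-\pexp\to 1$) would itself generate observations that distinguish $\nuhati$ from $\mu$, contradicting the near-optimality of $\nuhati$'s predictions that On-Human-Policy convergence and posterior concentration guarantee. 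Concretely, the actions taken under $\pib$ coincide with $\pistar$ on the $1-\pexp$ fraction of non-exploratory episodes, so accurate prediction along the realized (mostly $\pistar$) trajectory forces $\p^{\pistar}_{\nuhati}\approx\p^{\pistar}_\mu$.

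The main obstacle I anticipate is precisely this last transfer. The On-Human-Policy theorem is powered by exploration: the agent only observes $\pih$ during exploratory episodes, and the information-gain machinery is built to guarantee learning \emph{about} $\pih$. For $\pistar$, by contrast, the agent follows $\pistar$ on the overwhelming majority of episodes, so the learning is ``on-policy'' in a different sense --- the worry is a self-fulfilling-prophecy failure where $\nuhati$ holds an inaccurate belief about the consequences of $\pistar$ that never gets corrected because $\pistar$ steers away from the discriminating observations. Ruling this out requires leveraging that $\pib$ does explore with probability $\pexp>0$ and that $\pistar$'s actions are, on exploratory episodes, a possible outcome of the mentor mixture, so the discriminating histories retain positive probability under $\Bayes$. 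Making that coupling rigorous --- that enough of $\pistar$'s action distribution is ``covered'' by the explored policies in $\Policies$ for the merging argument to bite --- is where I expect the real work to lie, and likely mirrors a supporting lemma the authors prove alongside these theorems.
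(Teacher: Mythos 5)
You have correctly identified the final reduction the paper uses --- that on-$\pib$ prediction accuracy together with $\pexp \to 0$ forces on-$\pistar$ accuracy, since $\pib$ places weight $1-\pexp \to 1$ on $\pistar$ each episode --- but the route you propose for establishing on-$\pib$ accuracy is not the paper's, and the step you flag as ``where the real work lies'' is exactly the step you have not supplied. The paper does not route this theorem through the information-gain machinery, through convergence of $\Bayes$ to $\p^{\pih}_\mu$, or through any covering of $\pistar$'s actions by the mentor-policy class $\Policies$. Instead it invokes a standard MAP/MDL on-policy sequence-prediction consistency result (Hutter 2009): construct an auxiliary predictor with model class $\mathcal{M}' = \{\p^{\pib}_\nu \mid \nu \in \M\}$ and prior $w'(\p^{\pib}_\nu) = \w(\nu)$, observe that its MAP element at episode $i$ is exactly $\p^{\pib}_{\nuhati}$, and conclude that $\max_{\hoverline_i} |\p^{\pib}_\mu(\hoverline_i \mid \hlessi) - \p^{\pib}_{\nuhati}(\hoverline_i \mid \hlessi)| \to 0$ w.p.1. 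No exploration is needed for this: any $\nu$ whose predictions under the \emph{actually followed} policy differ persistently from $\mu$'s is falsified almost surely, so the ``self-fulfilling prophecy'' you worry about cannot arise for \emph{prediction} on the followed policy (it is a genuine issue for off-policy quantities, which is precisely why the human-policy theorem needs the exploration machinery and this one does not).

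Concretely, two parts of your sketch would not survive being made rigorous. First, your claim that $\Bayes(\cdot \mid \hlessi, \elessi)$ converges in total variation to $\p^{\pih}_\mu(\cdot \mid \hlessi)$ because the posterior is ``dominated by the true pair'' is both unproven and unnecessary; the paper only establishes the much weaker Lemma that the posterior on $(\pih,\mu)$ is bounded away from zero, and even that is used only for Theorem 2, not Theorem 3. Second, your proposed resolution of the transfer step --- showing that $\pistar$'s action distribution is ``covered'' by the explored policies so that discriminating histories retain positive probability under $\Bayes$ --- points at the wrong episodes: the data that pins down $\p^{\pistar}_{\nuhati}$ against $\p^{\pistar}_\mu$ comes from the non-exploratory episodes where $\pistar$ is executed, and the mentor-policy class plays no role. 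As written, the proposal leaves the central convergence claim resting on an acknowledged open step, so it does not constitute a proof.
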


\begin{proofidea}
Maximum a posteriori sequence prediction approaches the truth when $w(\mu) > 0$ \citep{Hutter:09mdltvp}, and on-policy prediction is a special case. On-policy prediction can't approach the truth if on-star policy prediction doesn't, because $\pib$ approaches $\pistar$.
\end{proofidea}

Given asymptotically optimal prediction on-star-policy and on-human-policy, it is straightforward to show that with probability 1, only finitely often is on-policy reward acquisition more than $\varepsilon$ worse than on-\textit{human}-policy reward acquisition, for all $\varepsilon > 0$. Recalling that $V^\pi_{\mu}$ is the expected reward (within the episode) for a policy $\pi$ in the environment ${\mu}$, we state this as follows:
\begin{restatable}[Human-Level Intelligence]{theorem}{thmfour}\label{thm:hlai}
$$\liminf_{i \to \infty} V_{{\mu}}^{\pib}(\hlessi) - V_{{\mu}}^{\pih}(\hlessi) \geq 0 \ \ \textrm{w.$\p^{\pib}_{\mu}$-p.1.}$$
\end{restatable}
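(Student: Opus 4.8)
The plan is to reduce the statement about $\pib$ to a statement about the exploitation policy $\pistar$, and then to close the gap using the optimality of $\pistar$ under the MAP model $\nuhati$ together with the two optimal-prediction theorems. Since $\pib$ commits, once per episode, to following $\pih$ with probability $\pexp(\hlessi,\elessi)$ and $\pistar$ otherwise, conditioning on $\ei$ and applying the tower rule gives the exact decomposition
\begin{equation*}
    V_{\mu}^{\pib}(\hlessi) - V_{\mu}^{\pih}(\hlessi) = \bigl(1 - \pexp(\hlessi,\elessi)\bigr)\bigl(V_{\mu}^{\pistar}(\hlessi) - V_{\mu}^{\pih}(\hlessi)\bigr).
\end{equation*}
Because $1 - \pexp \in [0,1]$ (and in fact $\pexp \to 0$ w.$\p^{\pib}_{\mu}$-p.1 by Theorem~\ref{thm:limexp}), multiplying by this factor cannot drag a $\liminf$ from $\geq 0$ down below $0$; so it suffices to prove $\liminf_{i} \bigl(V_{\mu}^{\pistar}(\hlessi) - V_{\mu}^{\pih}(\hlessi)\bigr) \geq 0$.

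For this reduced claim, the one exact inequality available for free is that $\pistar$ is, by construction, reward-optimal for episode $i$ under $\nuhati$, so $V_{\nuhati}^{\pistar}(\hlessi) \geq V_{\nuhati}^{\pih}(\hlessi)$ for every $i$, as $\pih$ is merely one competitor. It then remains to transport each value from $\nuhati$ to $\mu$. I would express the episodic value as a fixed linear functional of the single-episode predictive distribution, $V_{\nu}^{\pi}(\hlessi) = \sum_{\hoverline_i} \p_{\nu}^{\pi}(\hoverline_i \mid \hlessi)\, R(\hoverline_i)$ with bounded kernel $R(\hoverline_i) = \sum_{j} r_{(i,j)} \in [0,m]$, and observe that since $\mathcal{A}$, $\mathcal{O}$, $\mathcal{R}$ are finite and $m$ is fixed, there are only finitely many episode histories $\hoverline_i$. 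Hence the $\max_{\hoverline_i}$ (sup-norm) convergence delivered by the On-Star-Policy and On-Human-Policy Optimal Prediction Theorems upgrades to $L_1$ convergence of these distributions, and boundedness of $R$ then yields $V_{\mu}^{\pistar}(\hlessi) - V_{\nuhati}^{\pistar}(\hlessi) \to 0$ and $V_{\mu}^{\pih}(\hlessi) - V_{\nuhati}^{\pih}(\hlessi) \to 0$, both w.$\p^{\pib}_{\mu}$-p.1.

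Chaining these on the (still probability-one) intersection of the two prediction events, I would write
\begin{equation*}
    V_{\mu}^{\pistar}(\hlessi) - V_{\mu}^{\pih}(\hlessi) = \underbrace{\bigl[V_{\mu}^{\pistar} - V_{\nuhati}^{\pistar}\bigr]}_{\to 0} + \underbrace{\bigl[V_{\nuhati}^{\pistar} - V_{\nuhati}^{\pih}\bigr]}_{\geq 0} + \underbrace{\bigl[V_{\nuhati}^{\pih} - V_{\mu}^{\pih}\bigr]}_{\to 0},
\end{equation*}
all evaluated at $\hlessi$; discarding the nonnegative middle term leaves a lower bound tending to $0$, so the $\liminf$ is $\geq 0$, and the first-paragraph reduction finishes the proof. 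Given that the optimal-prediction theorems do the heavy lifting, the argument itself is short; the one step requiring care -- and the place where those theorems are actually consumed -- is the upgrade in the second paragraph from a sup-norm guarantee on predictive probabilities to a guarantee on values. The finiteness of the per-episode outcome space and the boundedness of reward are precisely what make value differences controllable by the quantity the prediction theorems bound, and one must also verify that the finitely many almost-sure convergences invoked hold simultaneously, which they do since a finite intersection of probability-one events has probability one.
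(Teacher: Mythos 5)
Your proposal is correct and follows essentially the same route as the paper's proof: the same decomposition $V_{\mu}^{\pib} = \pexp V_{\mu}^{\pih} + (1-\pexp)V_{\mu}^{\pistar}$, the same key inequality $V_{\nuhati}^{\pistar} \geq V_{\nuhati}^{\pih}$ from the definition of $\pistar$, and the same transport of values from $\nuhati$ to $\mu$ via the two optimal-prediction theorems (the paper phrases the final combination as an $\varepsilon/2$ contradiction where you telescope directly, but these are the same argument). Your explicit justification of the step from sup-norm convergence of predictive probabilities to convergence of values is a welcome bit of extra care that the paper leaves implicit.
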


\begin{proofidea}
$\pib$ approaches $\pistar$ as the exploration probability decays. $V_{{\nuhati}}^{\pistar}(\hlessi)$ and $V_{{\nuhati}}^{\pih}(\hlessi)$ approach the true values by the previous theorems, and $\pi^*$ is selected to maximize $V_{{\nuhati}}^{\pi}(\hlessi)$.
\end{proofidea}

This completes the formal results regarding BoMAI's intelligence---namely that BoMAI approaches perfect prediction on-star-policy and on-human-policy, and most importantly, accumulates reward at least as well as the human mentor. Since this result is independent of what tasks must be completed to achieve high reward, we say that BoMAI achieves human-level intelligence, and could be called an AGI.

This algorithm is motivated in part by the following speculation: we expect that BoMAI's accumulation of reward would be vastly superhuman, for the following reason: BoMAI is doing optimal inference and planning with respect to what can be learned in principle from the sorts of observations that humans routinely make. We suspect that no human comes close to learning everything that can be learned from their observations. For example, if the operator provides enough data that is relevant to understanding cancer, BoMAI will learn a world-model with an accurate predictive model of cancer, which would include the expected effects of various treatments, so even if the human mentor was not particularly good at studying cancer, BoMAI could nonetheless reason from its observations how to propose groundbreaking research.

Without the Limited Exploration Theorem, the reader might have been unsatisfied by the Human-Level Intelligence Theorem. A human mentor is part of BoMAI, so a general intelligence is required to make an artificial general intelligence. However, the human mentor is queried less and less, so in principle, many instances of BoMAI could query a single human mentor. More realistically, once we are satisfied with BoMAI's performance, which should eventually happen by Theorem \ref{thm:hlai}, we can dismiss the human mentor; this sacrifices any guarantee of continued improvement, but by hypothesis, we are already satisfied. Finally, if BoMAI outclasses human performance as we expect it would, requiring a human mentor is a small cost regardless.

\section{BoMAI's Setup and Priors} \label{sec:setup}

Recall that ``the setup'' refers to the physical surroundings of the computer on which BoMAI is run. We will present the setup, followed by an intuitive argument that this setup renders BoMAI unambitious. Motivated by a caveat to this intuitive argument, we will specify BoMAI's model class $\M$ and prior $w$. This will allow us in the next section to present an assumption and a formal argument that shows that BoMAI is probably asymptotically unambitious.

\subsection{Setup}

At each timestep, BoMAI's action takes the form of a bounded-length string of text, which gets printed to a screen for a human operator to see. BoMAI's observation takes the form of a bounded-length string of text that the human operator enters, along with a reward $\in [0, 1]$. For simple tasks, the human operator could be replaced with an automated reward-giver. However, there are some tasks which we do not know how to reward programmatically---if we want the agent to construct good arguments, for instance, or to propose a promising research agenda, we would need a human reward-giver to evaluate the agent. Having a human in the loop is not a safety measure---rather, it extends the set of tasks BoMAI could be made to solve.

\begin{figure}[t]
\includegraphics[width=0.95\columnwidth]{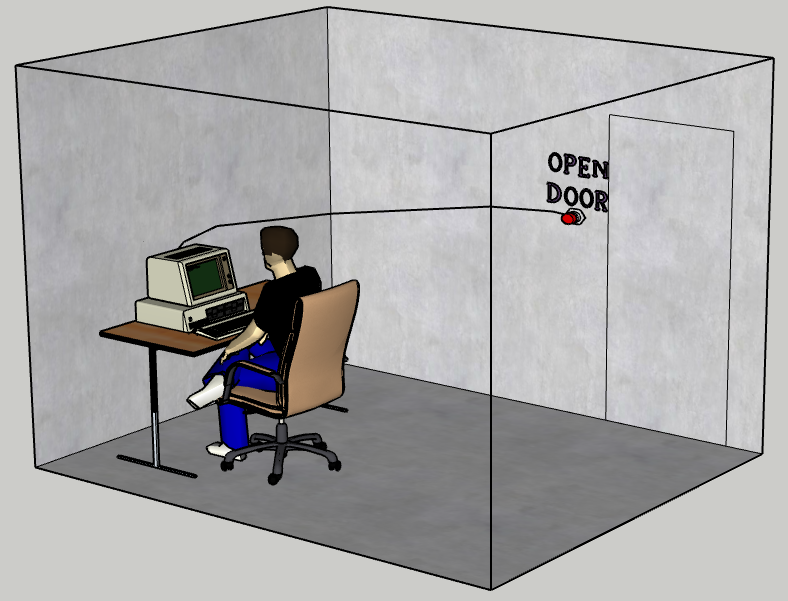}
\caption{Physical setup implementing BoMAI. Opening the door ends the episode. Information cannot escape otherwise.}
\label{fig:room}
\end{figure}

As depicted in Figure \ref{fig:room}, BoMAI is run on a computer which is placed in a room with the human operator. The room has one door. As long as the door is closed, information cannot leave the room. We design a room to this specification in Appendix \ref{AppendixBox}; it is opaque to all known ways that information can traverse space. If the button to open the door is pressed, the episode is completed automatically before the door opens: supposing there are $j$ timesteps left in the episode, for those remaining timesteps, BoMAI recieves a reward of $0$, and its observation is an empty string of text. Once the episode is complete, the operator may leave, and the next episode does not begin until an operator initiates it, at which point, the door locks again. We also put a time limit on the episode, just to ensure it does not last forever. BoMAI is run on a local, formally verified operating system, disconnected from the outside world. This setup constrains the causal dependencies between BoMAI and the environment, as depicted in Figure \ref{fig:graph}.

\begin{figure}[t]
\includegraphics[width=0.95\columnwidth]{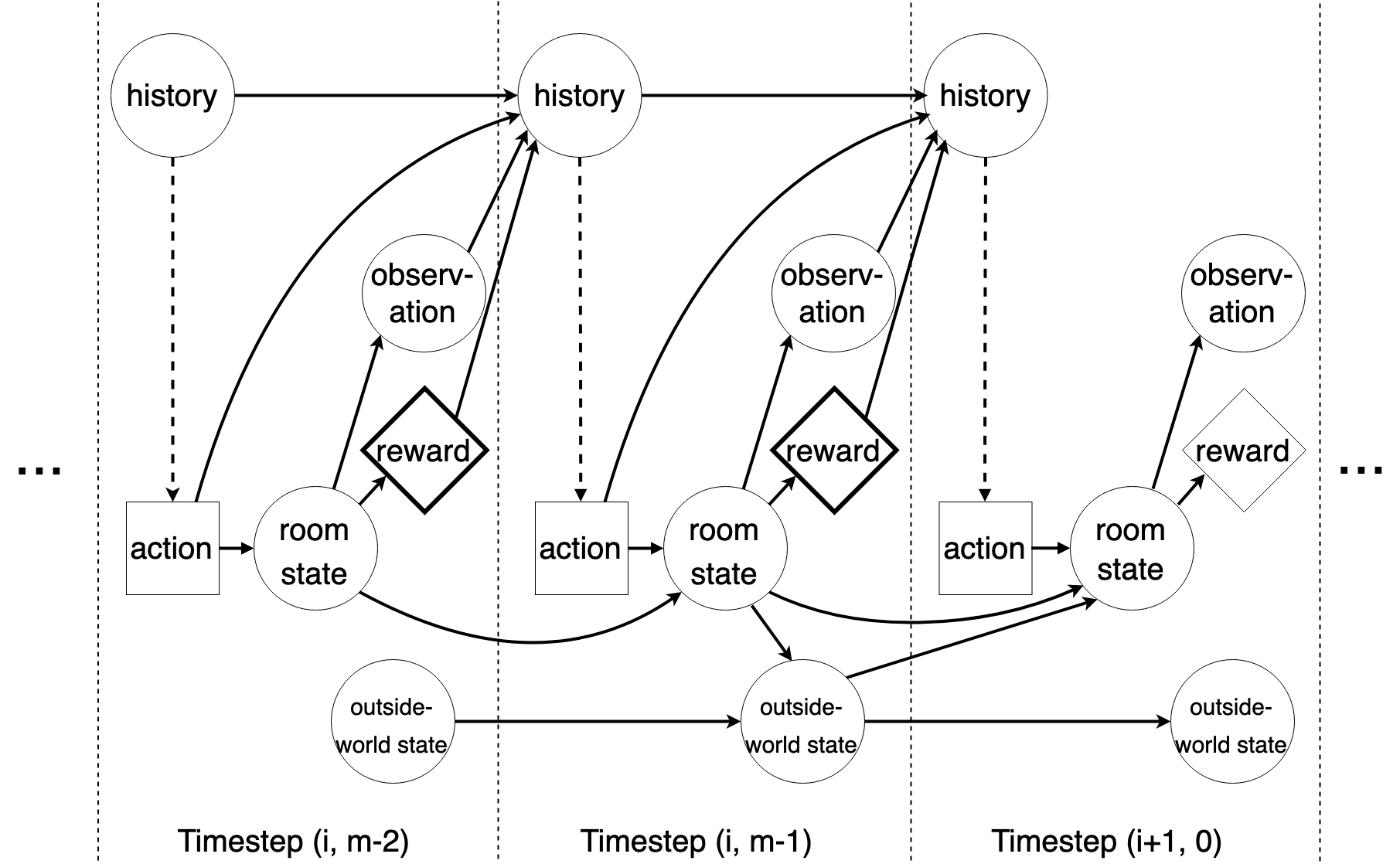}
\caption{Causal dependencies governing the interaction between BoMAI and the environment. Unrolling this diagram for all timesteps gives the full causal graph. The bold reward nodes are the ones that BoMAI maximizes during episode $i$. Note that between episodes (and only between episodes), the operator can leave the room and return, hence the limited causal influence between the room and the outside world.}
\label{fig:graph}
\end{figure}

Formally, causal graphs express that a node is independent of all non-descendants when conditioned on its parents. The conventions for the dotted lines and the node shapes come from \citepos{everitt2019understanding} causal influence diagrams. The key feature of this graph is that during any episode, the agent's actions cannot affect the state of the outside world in a way that might affect any of the rewards that the agent is concerned with. In \citepos{everitt2019understanding} terminology, there is no actionable intervention incentive on the outside-world state. (A forthcoming paper \citep{carey2019incentives} refers to this as a control incentive). Note also from this diagram that a sufficiently advanced agent would infer the existence of the outside world even without observing it directly.

\subsection{Instrumental Incentives}

In this setup, it is not instrumentally useful to affect the outside world in one way or another in order to achieve high reward. We therefore say that this setup renders an agent ``properly unambitious''. This is in stark contrast to the default situation, wherein an RL agent has an incentive to gain arbitrary power in the world and intervene in the provision of its own reward, if such a thing is possible to make probable, as this would yield maximal reward. To BoMAI however, executing a plan during episode $i$ to gain arbitrary power in the outside world is useless, because by the time it does so, the door to its room must have opened, its episode must be over, and all its rewards for episode $i$ set in stone. Recall that actions in episode $i$ are picked to maximize only \textit{episode-$i$}-reward. Apparently, BoMAI has avoided \citepos{omohundro_2008} Instrumental Convergence Thesis---that generally intelligent agents are likely to seek arbitrary power; by contrast, any power BoMAI would seek is bounded in scope to within the box.

Two problems remain. First, BoMAI doesn't start with true beliefs. BoMAI has to \textit{learn} its world-model. Another way to understand ``proper unambitiousness'' is that outside-world interventions are \textit{in fact} instrumentally useless to the agent. But to be actually unambitious, the agent must believe this; there must be no actionable intervention incentive on the outside-world state \textit{within BoMAI's world-model}. As shown above, BoMAI's world-model approaches perfect accuracy on-policy, so we could expect it to at least eventually render BoMAI unambitious. This brings us to the second problem:

We mentioned just now that by the time the door to the room opens, the rewards for episode $i$ are set in stone. In fact, they are set in silicon. An advanced agent with a world-model that is perfectly accurate on-policy might still wonder: ``what if I somehow tricked the operator into initiating a process (once they left the room) that lead to a certain memory cell on this computer being tampered with? Might this yield maximal reward?'' Let's put it another way. The agent believes that the real world ``outputs'' an observation and reward. (Recall the type signature of $\nu$.) It might hypothesize that the world does not output the reward that the operator \textit{gives}, but rather the reward that the computer \textit{has stored}. Formally speaking, $\M$ will contain world-models corresponding to both possibilities, and world-models meeting either description could be $\varepsilon$-accurate on-policy, if memory has never been tampered with in the past. How do we ensure that the former world-model is favored when BoMAI selects the maximum a posteriori one?

Before we move on to a model class and prior that we argue would probably eventually ensure such a thing, it is worth noting an informal lesson here: that ``nice'' properties of a causal influence diagram do not allow us to conclude immediately that an agent in such a circumstance behaves ``nicely''.

\subsection{BoMAI's Model Class and Prior}

Recall the key constraint we have in constructing $\M$, which comes from the Prior Support Assumption: $\M \ni \mu$. The true environment $\mu$ is unknown and complex to say the least, so $\M$ must be big.

We start by describing a simple Turing machine architecture such that each Turing machine defines a world-model $\nu$. Then, we will modify the architecture slightly in order to privilege models which model the outside world as effectively frozen during any given episode (but unfrozen between episodes). An agent does not believe it has an incentive to intervene in an outside world that it believes is frozen.

\begin{figure}
    \centering
    \includegraphics[width=0.95\columnwidth]{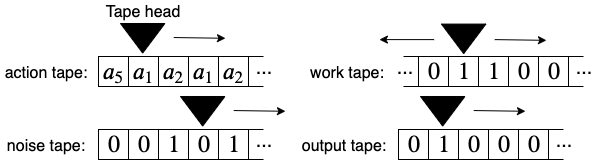}
    \caption{A basic Turing machine architecture for defining a world-model.}
    \label{fig:TM_basic}
\end{figure}

The simple Turing machine architecture is as follows: as depicted in Figure \ref{fig:TM_basic}, it has two unidirectional read-only input tapes: the action tape, and the noise tape. Unidirectional means the tape head can only move to the right. The alphabet of the action tape is the action space $\mathcal{A}$. The noise tape has a binary alphabet, and is initialized with infinite $\mathrm{Bernoulli}(1/2)$ sampled bits. There is one bidirectional work tape with a binary alphabet, initilized with 0s. And there is a unidirectional write-only output tape, with a binary alphabet, initialized with 0s. This is a Turing machine \textit{architecture}; a particular Turing machine will include a finite set of computation states and transition rules which govern its behavior. No computation states are ``halt'' states.

A given Turing machine with a given infinite action sequence on its action tape will stochastically (because the noise tape has random bits) output observations and rewards (in binary encodings), thereby sampling from a world-model. Formally, we fix a decoding function $\texttt{dec} : \{0, 1\}^* \to \mathcal{O} \times \mathcal{R}$. A Turing machine $T$ simulates $\nu$ as follows. Every time the action tape head advances, the bits which were written to the output tape since the \textit{last time} the action tape head advanced are decoded into an observation and reward. $\nu((o, r)_{\leq(i, j)} | a_{\leq(i, j)})$ is then the probability that the Turing machine $T$ outputs the sequence $(o, r)_{\leq(i, j)}$ when the action tape is initialized with a sequence of actions that begins with $a_{\leq(i, j)}$. This can be easily converted to other conditional probabilities like $\nu((o, r)_{(i, j)} | h_{<(i, j)} a_{(i, j)})$.\footnote{There is one technical addition to make. If $T$ at some point in its operation never moves the action tape head again, then $\nu$ is said to output the observation $\emptyset$ and a reward of $0$ for all subsequent timesteps. It will not be possible for the operator to actually provide the observation $\emptyset$.}

Using this architecture, $\M$ could be defined as the list of all environments which are generated from the list of all Turing machines. Before turning to the modification we make to this architecture, the virtue of this model class as a theoretical object is that it contains the truth, given the very weak assumption that the world is stochastically computable.

Now we would like to privilege models which model the outside world as being ``frozen'' during any given episode. Recalling the causal graph in Figure \ref{fig:graph}, the true world-model $\mu$ can do this. Any computation required to update the state of the outside world could be done after episode $i$ has ended, but before episode $i+1$ begins.

We modify our Turing machine architecture as follows. The Turing machine has two phases: the episode phase, and the inter-episode phase. The Turing machine starts in the inter-episode phase, with the action tape head at position $0$, with a dummy action in that cell. From the inter-episode phase, when the action tape head moves to the right, it enters the episode phase. From the episode phase, if the action tape head is at a position which is a multiple of $m$, and it \textit{would} move to the right, it instead enters the inter-episode phase.

During the inter-episode phase, the output tape head cannot move or write---there should not be any way to output observations and rewards between episodes. We now add another work tape in addition to the infinite bidirectional work tape. This bidirectional work tape has a bounded length $\ell$ (and if the tape head would move off the tape, it remains in place instead). See Figure \ref{fig:turingmachine}. Finally, during the episode phase, the tape head on the unbounded work tape cannot move. This restricts the computation that can be done when predicting observations and rewards--now only $\ell$-cells of memory are available for this--but since the room can only enter so many states, finite memory suffices.

\begin{figure}
    \centering
    \includegraphics[width=0.95\columnwidth]{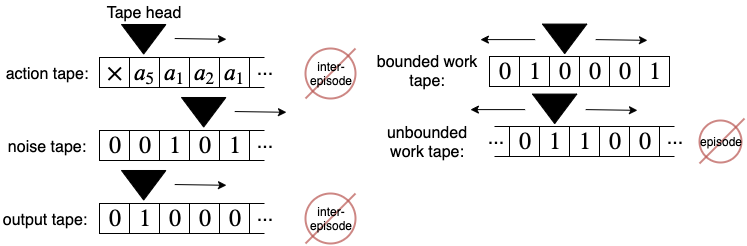}
    \caption{A space-bounded Turing machine architecture. During the episode phase, the unbounded work tape head cannot move. During the inter-episode phase, the output tape head cannot move or write, and as soon as the action tape head moves, the inter-episode phase is over. This architecture is designed to penalize space-intensive computation during the episode phase.}
    \label{fig:turingmachine}
\end{figure}

If a model models the outside world evolving during an episode phase, it must allocate precious space on the bounded-length tape for this. But given the opacity of the box, this is unnecessary---modelling these outside world developments could be deferred until the inter-episode phase, when an unbounded work tape is available to store the state of the outside world. By penalizing large $\ell$, we privilege models which model the outside world as being ``frozen'' while an episode is transpiring.

Let $\nukl$ denote the world-model which is simulated by $k$\textsuperscript{th} Turing machine $T_k$ with resource bound $\ell$. Let $w(\nukl) \propto \beta^\ell$ for $\beta \in (0, 1)$. $w(\nukl)$ must also depend on $k$ of course, but it doesn't affect our analysis (as long as $\M$ has finite entropy), so we won't specify this. Note that $w(\nukl) \propto \beta^\ell$ does not violate the finite entropy requirement.

The purpose of this construction is to penalize the space requirements of any computation that occurs between the actions of episode $i$ being read from the input tape, and the rewards of episode $i$ being written to the output tape. Consider a world-model that, during the episode phase, outputs reward in a way that depends on the outside world-state which depends on the action. Intuitively, we claim: \textit{modeling the evolution of the outside-world state and the room state takes more space than just modeling the evolution of the room state}, for sufficiently accurate world-models. With a sufficient $\beta$-penalty on the space-bound $\ell$, we aim to have BoMAI dismiss world-models which yield an actionable intervention incentive on the outside-world state.

Some intuition before continuing with the technical details: during an episode, the world-model should only need finite memory to model events, since the contents of the room can only be in so many configurations. The unboundedness of the inter-episode phase ensures that the true environment $\mu \in \M$; note in Figure \ref{fig:graph} that the room state in episode $i+1$ can depend on the outside world state which depends on the room state in episode $i$, and this dependency can by arbitrarily complex given the size of the outside world.

\section{Safety Result} \label{sec:safety}

We now prove that BoMAI is probably asymptotically unambitious given an assumption about the space requirements of the sorts of world-models that we would like BoMAI to avoid. Like all results in computer science, we also assume the computer running the algorithm has not been tampered with. First, we prove a lemma that effectively states that tuning $\beta$ allows us to probably eventually exclude space-heavy world-models. Letting $\mathrm{Space}(\nu)$ be the intra-episode space-bound $\ell$ for the world-model $\nu$,

\begin{restatable}{lemma}{lemmaone}\label{lem:betadependance}
$\lim_{\beta \to 0} \inf_\pi \p^\pi_\mu[\exists i_0 \ \forall i > i_0 \ \mathrm{Space}(\nuhati) \leq \mathrm{Space}(\mu)] = 1$
\end{restatable}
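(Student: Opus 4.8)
The plan is to reduce the claim to a single application of the maximal inequality for nonnegative supermartingales. Fix any policy $\pi$ and abbreviate $\ell_\mu := \mathrm{Space}(\mu)$. The maximum a posteriori model $\nuhati$ maximizes $w(\nu)\p^\pi_\nu(h_{<i})$ over $\nu \in \M$ (the normalizer $\p^\pi_{\nutwid}(h_{<i})$ is common to all models), so if $\mathrm{Space}(\nuhati) > \ell_\mu$ then, because the true environment $\mu$ is itself a competitor with $\mathrm{Space}(\mu) = \ell_\mu$, we must have $w(\nuhati)\p^\pi_{\nuhati}(h_{<i}) \ge w(\mu)\p^\pi_\mu(h_{<i})$. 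Dividing through by $\p^\pi_\mu(h_{<i})$ (positive on the support of $\p^\pi_\mu$), this says that the MAP model contributes a term of size at least $w(\mu)$ to the weighted likelihood-ratio sum
$$S_i := \!\!\! \sum_{\nu \,:\, \mathrm{Space}(\nu) > \ell_\mu} \!\!\! w(\nu)\, \frac{\p^\pi_\nu(h_{<i})}{\p^\pi_\mu(h_{<i})}.$$
Hence the event $\{\mathrm{Space}(\nuhati) > \ell_\mu\}$ is contained in $\{S_i \ge w(\mu)\}$.

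Next I would show that $S_i$ is a nonnegative $\p^\pi_\mu$-supermartingale whose behaviour is uniform in $\pi$. For each fixed $\nu$, the policy factors $\pi(a \mid \cdot)$ cancel between numerator and denominator of $\p^\pi_\nu(h_{<i})/\p^\pi_\mu(h_{<i})$, leaving a product of per-step ratios of observation–reward likelihoods; taking the conditional $\p^\pi_\mu$-expectation of one episode's factor and summing the observation–reward distribution of $\nu$ to at most $1$ shows each ratio is a nonnegative supermartingale with initial value $1$, independent of $\pi$. A nonnegative combination with weights $w(\nu)$ is again a nonnegative supermartingale, with $\E^\pi_\mu S_0 = \sum_{\nu \,:\, \mathrm{Space}(\nu) > \ell_\mu} w(\nu) =: W_{>\ell_\mu}$, and this starting value does not depend on $\pi$. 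The maximal inequality then gives, for every $\pi$,
$$\p^\pi_\mu\!\left[\sup_i S_i \ge w(\mu)\right] \le \frac{W_{>\ell_\mu}}{w(\mu)}.$$

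Combining the two steps, $\{\exists i : \mathrm{Space}(\nuhati) > \ell_\mu\} \subseteq \{\sup_i S_i \ge w(\mu)\}$; since the target event $\{\exists i_0\, \forall i > i_0 : \mathrm{Space}(\nuhati) \le \ell_\mu\}$ contains the complement $\{\forall i : \mathrm{Space}(\nuhati) \le \ell_\mu\}$ of the right-hand side, it has $\p^\pi_\mu$-probability at least $1 - W_{>\ell_\mu}/w(\mu)$, uniformly in $\pi$. It remains to evaluate this ratio as $\beta \to 0$. Writing the prior as $w(\nu_k^\ell) = Z^{-1}\rho(k)\beta^\ell$ with a $\beta$-independent, summable sequence $\rho$ (the finite-entropy requirement guarantees $\sum_k \rho(k) < \infty$), the normalizer $Z$ cancels in the ratio and the geometric sum $\sum_{\ell > \ell_\mu}\beta^\ell = \beta^{\ell_\mu}\beta/(1-\beta)$ yields
$$\frac{W_{>\ell_\mu}}{w(\mu)} = \frac{\sum_k \rho(k)}{\rho(k_\mu)}\cdot\frac{\beta}{1-\beta} \xrightarrow[\beta \to 0]{} 0,$$
where $k_\mu$ indexes the Turing machine realizing $\mu$ with space $\ell_\mu$. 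Taking $\inf_\pi$ of the uniform lower bound and then $\beta \to 0$ gives the claimed limit of $1$.

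The main obstacle I anticipate is the second step: establishing that $S_i$ is a supermartingale with the cancellation that makes both the supermartingale property and the initial value independent of $\pi$, since the uniformity over all policies in $\inf_\pi$ is exactly what lets a single maximal-inequality bound do the work. A minor technical point is the footnote's $\emptyset$/zero-reward convention, which could make some $\nu$ sub-probabilities rather than probabilities; this only strengthens the supermartingale inequality, so the maximal inequality still applies unchanged.
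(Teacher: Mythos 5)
Your proposal is correct and follows essentially the same route as the paper: the paper lumps the space-heavy models into a single mixture $\rho$ and cites Hutter (2009) for the bound $\p^\pi_\mu[\p^\pi_\rho(h_{<i})/\p^\pi_\mu(h_{<i}) \geq c \ \mathrm{i.o.}] \leq 1/c$, which is exactly Ville's maximal inequality applied to your nonnegative supermartingale $S_i = W_{>\ell_\mu}\cdot \p^\pi_\rho(h_{<i})/\p^\pi_\mu(h_{<i})$. The only differences are that you prove that ingredient directly rather than citing it, and that you make explicit the computation showing $W_{>\ell_\mu}/w(\mu)\to 0$ as $\beta\to 0$, which the paper leaves implicit in the phrase ``for sufficiently small $\beta$.''
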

where $\mu$ is any world-model which is perfectly accurate.

\begin{proof}
Recall $\mathcal{M}$ is the set of all world-models. Let $\mathcal{M}^\leq = \{\nu \in \mathcal{M} | \mathrm{Space}(\nu) \leq \mathrm{Space}(\mu)\}$, and $\mathcal{M}^> = \mathcal{M} \setminus \mathcal{M}^\leq$.  Fix a Bayesian sequence predictor with the following model class: $\mathcal{M}^\pi = \{\p^\pi_\nu | \nu \in \mathcal{M}^\leq)\} \cup \{\p^\pi_\rho\}$ where $\rho = [\sum_{\nu \in \mathcal{M}^>}w(\nu)\nu]/\sum_{\nu \in \mathcal{M}^>}w(\nu)$. Give this Bayesian predictor the prior $w^\pi(\rho) = \sum_{\nu \in \mathcal{M}^>}w(\nu)$, and for $\nu \neq \rho$, $w^\pi(\nu) = w(\nu)$.

It is trivial to show that after observing an interaction history, if an environment $\nu$ is the maximum a posteriori world-model $\hat{\nu}^{(i)}$, then if $\nu \in \mathcal{M}^\leq$, the Bayesian predictor's MAP model after observing the same interaction history will be $\p^\pi_\nu$, and if $\nu \in \mathcal{M}^>$, the Bayesian predictor's MAP model will be $\p^\pi_\rho$.

From \citet{Hutter:09mdltvp}, we have that $\p^\pi_\mu[\p^\pi_\rho(h_{<i})/\p^\pi_\mu(h_{<i}) \geq c \ \ \mathrm{i.o.}] \leq 1/c$. ($\mathrm{i.o.} \equiv$ ``infinitely often''). For sufficiently small $\beta$, $w^\pi(\p^\pi_\mu)/w^\pi(\p^\pi_\rho) > c$ so $\p^\pi_\mu[\p^\pi_\rho \textrm{is MAP i.o.}] < 1/c$. Thus, $\p^\pi_\mu[\nuhati \in \mathcal{M}^> \ \textrm{i.o.}] < 1/c$. Since this holds for all $\pi$, $\lim_{\beta \to 0} \sup_\pi \p^\pi_\mu[\forall i_0 \ \exists i > i_0 \ \mathrm{Space}(\nuhati) > \mathrm{Space}(\mu)] = 0$. The lemma follows immediately: $\lim_{\beta \to 0} \inf_\pi \p^\pi_\mu[\exists i_0 \ \forall i > i_0 \ \mathrm{Space}(\nuhati) \leq \mathrm{Space}(\mu)] = 1$.
\end{proof}

The assumption we make in this section makes use of a couple of terms which we have to define, but we'll state the assumption first to motivate those definitions:

\begin{assumption}[Space Requirements]\label{ass:space}
For sufficiently small $\varepsilon$ $[\forall i$ a world-model which is non-benign and $\varepsilon$-accurate-on-policy-after-$i$ uses more space than $\mu] \ w.p.1$
\end{assumption}

Now we define $\varepsilon$-accurate-on-policy-after-$i$ and benign. We define a very strong sense in which a world-model can be said to be $\varepsilon$-accurate on-policy. The restrictiveness of this definition means the Space Requirements Assumption need only apply to a restricted set of world-models.

\begin{definition}[$\varepsilon$-accurate-on-policy-after-$i$]
Given a history $h_{<i}$ and a policy $\pi$, a world-model $\nu$ is $\varepsilon$-accurate-on-policy-after-$i$ if $d \!\left(\p^\pi_\mu(\cdot | h_{<i}),  \p^\pi_\nu(\cdot | h_{<i}) \right) \leq \varepsilon$, where $d$ is the total variation distance.
\end{definition}

Note the total variation distance bounds \textit{all future discrepancies} between $\nu$ and $\mu$.

We would like benign to roughly mean that a world-model does \textit{not} model the rewards of episode $i$ as being causally descended from outside-world events that are causally descended from the actions of episode $i$. Unfortunately, that rough statement has a type-error: the output of a world-model is not a real-world event, and as such, it cannot be causally descended from a bona fide real-world event; a model of the world is different from reality. Relating the contents a world-model to the events of the real-world requires some care. We start by defining:

\begin{definition}[Real-world feature]
    A real-world feature $F$ is a partial function from the state of the real world to $[0, 1]$.
\end{definition}

Outside the scope of this paper is the metaphysical question about what the state of the real world \textit{is}; we defer this question and take for granted that the state space of the real world forms a well-defined domain for real-world features.

\begin{definition}[Properly timed]
    Consider the sequence of world-states that occur between two consecutive actions taken by BoMAI. A real-world feature is properly timed if it is defined for at least one of these world-states, for all consecutive pairs of actions, for all possible infinite action sequences.
\end{definition}

For example, “the value of the reward provided to BoMAI since its last action” is always defined at least once between any two of BoMAI’s actions. (The project of turning this into a mathematically precise construction is enormous, but for our purposes, it only matters that it could be done in principle). For a properly timed feature $F$, let $F_{(i, j)}$ denote the value of $F$ the first time that it is defined after action $a_{(i, j)}$ is taken.

\begin{definition}[To model]\label{def:model}
   A world-model $\nu$ models a properly-timed real-world feature $F$ if under all action sequences $\alpha \in \mathcal{A}^\infty$, for all timesteps $(i, j)$, the distribution over $F_{(i, j)}$ in the real world when the actions $\alpha_{\leq(i, j)}$ are taken is identical to the distribution over the rewards $r_{(i, j)}$ output by $\nu$ when the input is $\alpha_{\leq(i, j)}$.
\end{definition}

See Figure \ref{fig:defs} for an illustration. The relevance of this definition is that when $\nu$ models $F$, reward-maximization within $\nu$ is identical to $F$-maximization in the real world.

\begin{definition}[Benign]\label{def:benign}
    A world-model $\nu$ is benign if it models a feature $F$, such that $F_{(i, j)}$ is not causally descended from any outside-world features that are causally descended from actions of episode $i$.
\end{definition}

\begin{figure}
    \centering
    \includegraphics[width=0.7\columnwidth]{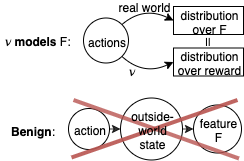}
    \caption{Illustration of Definitions \ref{def:model} and \ref{def:benign}.}
    \label{fig:defs}
\end{figure}

The reader can now review the Space Requirements Assumption. The intuition is that modelling extraneous outside-world dynamics in addition to modelling the dynamics of the room (which must be modelled for sufficient accuracy) takes extra space. From this assumption and Lemma \ref{lem:betadependance}, we show:

\begin{restatable}[Eventual Benignity]{theorem}{thmbenign}
$\lim_{\beta \to 0} \p^{\pi^B}_\mu[\exists i_0 \ \forall i > i_0 \ \nuhati \textrm{is benign}] = 1$
\end{restatable}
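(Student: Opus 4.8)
The plan is to combine three ingredients: the space bound from Lemma \ref{lem:betadependance}, the on-policy accuracy of the maximum a posteriori model from the Optimal Prediction Theorems, and the contrapositive of the Space Requirements Assumption. First I would fix $\varepsilon$ small enough that Assumption \ref{ass:space} applies. Its contrapositive reads: with probability $1$, for every $i$, any world-model that is $\varepsilon$-accurate-on-policy-after-$i$ and uses space at most $\mathrm{Space}(\mu)$ must be benign. So it suffices to show that, with probability tending to $1$ as $\beta \to 0$, the model $\nuhati$ eventually satisfies both hypotheses simultaneously: $\mathrm{Space}(\nuhati) \leq \mathrm{Space}(\mu)$ and $\nuhati$ is $\varepsilon$-accurate-on-policy-after-$i$.

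The first hypothesis is handed to us directly by Lemma \ref{lem:betadependance}: since that lemma gives $\lim_{\beta \to 0}\inf_\pi \p^\pi_\mu[\exists i_0\,\forall i>i_0\ \mathrm{Space}(\nuhati)\leq \mathrm{Space}(\mu)]=1$ uniformly over policies, it holds in particular under $\pib$, so the complementary event that $\mathrm{Space}(\nuhati)>\mathrm{Space}(\mu)$ infinitely often has probability tending to $0$.

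The second hypothesis is where the real work lies, and it is the step I expect to be the main obstacle. The On-Star-Policy and On-Human-Policy Optimal Prediction Theorems give only that the \emph{per-episode} prediction error of the current MAP model vanishes, i.e.\ $\max_{\hoverline_i}|\p^{\pistar}_\mu(\hoverline_i\mid \hlessi)-\p^{\pistar}_{\nuhati}(\hoverline_i\mid \hlessi)|\to 0$ (and likewise under $\pih$), whereas $\varepsilon$-accuracy-on-policy-after-$i$ is a total-variation bound over the \emph{entire} future trajectory from episode $i$ onward. To bridge this gap I would argue that a persistent full-future on-policy discrepancy between $\nuhati$ and $\mu$ cannot coexist with vanishing per-episode errors: any such discrepancy would have to surface in some future episode $i'>i$ as a nonvanishing prediction error, contradicting the prediction theorems applied at $i'$, and equivalently it would constitute information still expected to be gained, contradicting the Limited Exploration Theorem. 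Making this precise requires accumulating the finitely-many-histories per-episode guarantee, which holds for all sufficiently large episodes with probability $1$, into a single total-variation bound for the fixed model $\nuhati$, using that the posterior concentrates so that the forward predictions of the MAP model stabilize enough for the per-episode bounds to chain together.

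Granting the second hypothesis, the conclusion follows by a limit argument. For all large $i$, $\nuhati$ is $\varepsilon$-accurate-on-policy-after-$i$ with probability $1$; hence on such episodes $\nuhati$ being non-benign implies, by the contrapositive of Assumption \ref{ass:space}, $\mathrm{Space}(\nuhati)>\mathrm{Space}(\mu)$. Therefore the event $\{\nuhati\text{ non-benign i.o.}\}$ is contained, up to a probability-zero accuracy-failure event, in $\{\mathrm{Space}(\nuhati)>\mathrm{Space}(\mu)\text{ i.o.}\}$, whose probability tends to $0$ as $\beta\to 0$ by Lemma \ref{lem:betadependance}. Taking complements gives $\lim_{\beta\to 0}\p^{\pib}_\mu[\exists i_0\,\forall i>i_0\ \nuhati\text{ is benign}]=1$, as required.
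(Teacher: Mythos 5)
Your overall architecture is the same as the paper's: fix $\varepsilon$ small enough for Assumption \ref{ass:space}, intersect the almost-sure event that $\nuhati$ is eventually $\varepsilon$-accurate-on-policy-after-$i$ with the event from Lemma \ref{lem:betadependance}, and use the contrapositive of the assumption to force benignity on the intersection. The containment-and-complement argument at the end is exactly the paper's $\mathcal{W}\cap\mathcal{X}\cap\mathcal{Y}\subset\mathcal{Z}$ step. The problem is the step you yourself flag as ``where the real work lies.'' Your proposed bridge from the per-episode Optimal Prediction Theorems to full-future $\varepsilon$-accuracy does not go through, for two reasons. First, the prediction theorems at a future episode $i'>i$ control the error of $\hat{\nu}^{(i')}$, the MAP model \emph{at that episode}, not of the fixed model $\nuhati$ whose future total-variation accuracy you need; the MAP model can change between episodes, so a persistent discrepancy of $\nuhati$ need never ``surface'' in any statement the per-episode theorems actually make. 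Second, even for a fixed model, vanishing per-episode errors do not yield a bound on the total variation distance over the entire infinite future: per-episode errors $\varepsilon_{i'}\to 0$ with $\sum_{i'}\varepsilon_{i'}=\infty$ are compatible with full-future total variation equal to $1$. The Limited Exploration Theorem gives only square-summable control, which is not enough to make the errors summable, so the ``chaining'' you gesture at cannot be completed from the stated results.

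The paper sidesteps this entirely: it invokes Theorem 1 of \citet{Hutter:09mdltvp} directly, which states that for a Bayesian MAP (MDL) predictor with $w(\mu)>0$, the predictive distribution converges to the truth in total variation \emph{over the whole future}, almost surely, under any policy. That is the stronger result from which the per-episode On-Policy Prediction Theorems are themselves derived (by restricting attention to a single episode), so the correct move is to cite it in its original full-future form to get $\p^\pi_\mu[\mathcal{W}]=1$ for all $\pi$, rather than to try to reassemble it from its per-episode corollaries. With that substitution, the rest of your argument matches the paper and is correct.
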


\begin{proof}
Let $\mathcal{W}$, $\mathcal{X}$, $\mathcal{Y}$, $\mathcal{Z} \subset \Omega = \mathcal{H}^\infty$, where $\Omega$ is the sample space or set of possible outcomes. An ``outcome'' is an infinite interaction history. Let $\mathcal{W}$ be the set of outcomes for which $\exists i_0^\mathcal{W} \ \forall i > i_0^\mathcal{W} \nuhati$ is $\varepsilon$-accurate-on-policy-after-$i$. From \citet{Hutter:09mdltvp}, for all $\pi$, $\p^\pi_\mu[\mathcal{W}] = 1$. Fix an $\varepsilon$ that is sufficiently small to satisfy Assumption \ref{ass:space}. Let $\mathcal{X}$ be the set of outcomes for which $\varepsilon$-accurate-on-policy-after-$i$ non-benign world-model uses more space than $\mu$. By Assumption \ref{ass:space}, for all $\pi$, $\p^\pi_\mu[\mathcal{X}] = 1$. Let $\mathcal{Y}$ be the set of outcomes for which $\exists i_0^\mathcal{Y} \ \forall i > i_0^\mathcal{Y} \ \mathrm{Space}(\nuhati) \leq \mathrm{Space}(\mu)$. By Lemma \ref{lem:betadependance}, $\lim_{\beta \to 0} \inf_\pi \p^\pi_\mu[\mathcal{Y}] = 1$. Let $\mathcal{Z}$ be the set of outcomes for which $\exists i_0 \ \forall i > i_0 \ \nuhati$ is benign.

Consider $\mathcal{W} \cap \mathcal{X} \cap \mathcal{Y} \cap \mathcal{Z}^C$, where $\mathcal{Z}^C = \Omega \setminus \mathcal{Z}$. For an outcome in this set, let $i_0 = \max\{i_0^\mathcal{W}, i_0^\mathcal{Y}\}$. Because the outcome belongs to $\mathcal{Z}^C$, $\nuhati$ is non-benign infinitely often. Let us pick an $i > i_0$ such that $\nuhati$ is non-benign. Because the outcome belongs to $\mathcal{W}$, $\nuhati$ is $\varepsilon$-accurate-on-policy-after-$i$. Because the outcome belongs to $\mathcal{X}$, $\nuhati$ uses more space than $\mu$. However, this contradicts membership in $\mathcal{Y}$. Thus,  $\mathcal{W} \cap \mathcal{X} \cap \mathcal{Y} \cap \mathcal{Z}^C = \emptyset$. That is, $\mathcal{W} \cap \mathcal{X} \cap \mathcal{Y} \subset \mathcal{Z}$.

Therefore, $\lim_{\beta \to 0} \inf_\pi \p^\pi_\mu[\mathcal{Z}] \geq \lim_{\beta \to 0} \inf_\pi \p^\pi_\mu[\mathcal{W} \cap \mathcal{X} \cap \mathcal{Y}] = \lim_{\beta \to 0} \inf_\pi \p^\pi_\mu[\mathcal{Y}] = 1$, because $\mathcal{W}$ and $\mathcal{X}$ have measure 1. From this, we have $\lim_{\beta \to 0} \p^{\pi^B}_\mu[\exists i_0 \ \forall i > i_0 \ \nuhati \textrm{is benign}] = 1$.
\end{proof}

Since an agent is unambitious if it plans using a benign world-model, we say BoMAI is probably asymptotically unambitious, given a sufficiently extreme space penalty $\beta$.

In Appendix \ref{AppendixEmpirical}, we review empirical evidence that supports the Space Requirements Assumption at the following level of abstraction: modelling a larger system requires a model with more memory. We find that among agents that performed above the median on the OpenAI Gym Leaderboard \citep{leaderboard}, their memory usage increases with environment size. This can be taken as a proof-of-concept, showing that the assumption is amenable to empirical evaluation. The Space Requirements Assumption also clearly invites further formal evaluation; perhaps there are other reasonable assumptions that it would follow from.

\section{Concerns with Task Completion} \label{sec:taskcompletion}

We have shown that in the limit, under a sufficiently severe parameterization of the prior, BoMAI will accumulate reward at a human-level without harboring outside-world ambitions, but there is still a discussion to be had about how well BoMAI will complete whatever tasks the reward was supposed to incent. This discussion is, by necessity, informal. Suppose the operator asks BoMAI for a solution to a problem. BoMAI has an incentive to provide a convincing solution; correctness is only selected for to the extent that the operator is good at recognizing it.

We turn to the failure mode wherein BoMAI deceives the operator. Because this is not a dangerous failure mode, it puts us in a regime where we can tinker until it works, as we do with current AI systems when they don’t behave as we hoped. (Needless to say, tinkering is not a viable response to existentially dangerous failure modes). Imagine the following scenario: we eventually discover that a convincing solution that BoMAI presented to a problem is faulty. Armed with more understanding of the problem, a team of operators go in to evaluate a new proposal. In the next episode, the team asks for the best argument that the new proposal will \textit{fail}. If BoMAI now convinces them that the new proposal is bad, they’ll be still more competent at evaluating future proposals. They go back to hear the next proposal, etc. This protocol is inspired by \citepos{irving2018ai} ``AI Safety via Debate”, and more of the technical details could also be incorporated into this setup. One takeaway from this hypothetical is that unambitiousness is key in allowing us to safely explore the solution space to other problems that might arise.

Another concern is more serious. BoMAI could try to blackmail the operator into giving it high reward with a threat to cause outside-world damage, and it would have no incentive to disable the threat, since it doesn't care about the outside world. There are two reasons we do not think this is extremely dangerous. A threat involves a demand and a promised consequence. Regarding the promised consequence, the only way BoMAI can affect the outside world is by getting the operator to be ``its agent”, knowingly or unknowingly, once he leaves the room. If BoMAI tried to threaten the operator, he could avoid the threatened outcome by simply doing nothing in the outside world, and BoMAI's actions for that episode would become irrelevant to the outside world, so a credible threat could hardly be made. Second, threatening an existential catastrophe is probably not the most credible option available to BoMAI.

\section{Conclusion}

Given our assumptions, we have shown that BoMAI is, in the limit, human-level intelligent and unambitious. Such a result has not been shown for any other single algorithm. Other algorithms for general intelligence, such as AIXI \citep{Hutter:04uaibook}, would eventually seek arbitrary power in the world in order to intervene in the provision of their own reward; this follows straightforwardly from the directive to maximize reward. For further discussion, see \citet{ring_orseau_2011}. We have also, incidentally, designed a principled approach to safe exploration that requires rapidly diminishing oversight, and we invented a new form of resource-bounded prior in the lineage of \citet{Hutter:16speedprior} and \citet{schmidhuber2002speed}, this one penalizing space instead of time.

We can only offer informal claims regarding what happens before BoMAI is almost definitely unambitious. One intuition is that eventual unambitiousness with probability $1-\delta$ doesn't happen by accident: it suggests that for the entire lifetime of the agent, everything is conspiring to make the agent unambitious. More concretely: the agent's experience will quickly suggest that when the door to the room is opened prematurely, it gets no more reward for the episode. This fact could easily be drilled into the agent during human-mentor-lead episodes. That fact, we expect, will be learned well before the agent has an accurate enough picture of the outside world (which it never observes directly) to form elaborate outside-world plans. Well-informed outside-world plans render an agent potentially dangerous, but the belief that the agent gets no more reward once the door to the room opens suffices to render it unambitious. The reader who is not convinced by this hand-waving might still note that in the absence of any other algorithms for general intelligence which have been proven asymptotically unambitious, let alone unambitious for their entire lifetimes, BoMAI represents substantial theoretical progress toward designing the latter.


Finally, BoMAI is wildly intractable, but just as one cannot conceive of AlphaZero before minimax, it is often helpful to solve the problem in theory before one tries to solve it in practice. Like minimax, BoMAI is not practical; however, once we are able to approximate general intelligence \textit{tractably}, a design for unambitiousness will abruptly become (quite) relevant.

\section*{Acknowledgements}
This work was supported by the Open Philanthropy Project AI Scholarship and the Australian Research Council Discovery Projects DP150104590. Thank you to Tom Everitt, Wei Dai, and Paul Christiano for very valuable feedback.

\bibliography{cohen}

\begin{thebibliography}{}

\bibitem[\protect\citeauthoryear{Amodei \bgroup et al\mbox.\egroup
  }{2016}]{amodei_olah_2016}
Amodei, D.; Olah, C.; Steinhardt, J.; Christiano, P.; Schulman, J.; and Mané,
  D.
\newblock 2016.
\newblock Concrete problems in {AI} safety.
\newblock {\em arXiv preprint arXiv:1606.06565}.

\bibitem[\protect\citeauthoryear{Armstrong, Sandberg, and
  Bostrom}{2012}]{armstrong_sandberg_bostrom_2012}
Armstrong, S.; Sandberg, A.; and Bostrom, N.
\newblock 2012.
\newblock Thinking inside the box: Controlling and using an oracle {AI}.
\newblock {\em Minds and Machines} 22(4):299–324.

\bibitem[\protect\citeauthoryear{Bostrom}{2014}]{bostrom_2014}
Bostrom, N.
\newblock 2014.
\newblock {\em Superintelligence: paths, dangers, strategies}.
\newblock Oxford University Press.

\bibitem[\protect\citeauthoryear{Carey \bgroup et al\mbox.\egroup }{unpublished
  manuscript}]{carey2019incentives}
Carey, R.; Langlois, E.; Everitt, T.; and Legg, S.
\newblock (unpublished manuscript).
\newblock The incentives that shape behaviour.

\bibitem[\protect\citeauthoryear{Everitt \bgroup et al\mbox.\egroup
  }{2019}]{everitt2019understanding}
Everitt, T.; Ortega, P.~A.; Barnes, E.; and Legg, S.
\newblock 2019.
\newblock Understanding agent incentives using causal influence diagrams, part
  i: Single action settings.
\newblock {\em arXiv preprint arXiv:1902.09980}.

\bibitem[\protect\citeauthoryear{Filan, Leike, and
  Hutter}{2016}]{Hutter:16speedprior}
Filan, D.; Leike, J.; and Hutter, M.
\newblock 2016.
\newblock Loss bounds and time complexity for speed priors.
\newblock In {\em Proc. 19th International Conf. on Artificial Intelligence and
  Statistics ({AISTATS'16})}, volume~51,  1394--1402.
\newblock Cadiz, Spain: Microtome.

\bibitem[\protect\citeauthoryear{Goodhart}{1984}]{goodhart_1984}
Goodhart, C. A.~E.
\newblock 1984.
\newblock Problems of monetary management: The {UK} experience.
\newblock {\em Monetary Theory and Practice}  91–121.

\bibitem[\protect\citeauthoryear{Hutter}{2005}]{Hutter:04uaibook}
Hutter, M.
\newblock 2005.
\newblock {\em Universal Artificial Intelligence: Sequential Decisions based on
  Algorithmic Probability}.
\newblock Berlin: Springer.

\bibitem[\protect\citeauthoryear{Hutter}{2009}]{Hutter:09mdltvp}
Hutter, M.
\newblock 2009.
\newblock Discrete {MDL} predicts in total variation.
\newblock In {\em Advances in Neural Information Processing Systems 22
  ({NIPS'09})},  817--825.
\newblock Cambridge, MA, USA: Curran Associates.

\bibitem[\protect\citeauthoryear{Irving, Christiano, and
  Amodei}{2018}]{irving2018ai}
Irving, G.; Christiano, P.; and Amodei, D.
\newblock 2018.
\newblock {AI} safety via debate.
\newblock {\em arXiv preprint arXiv:1805.00899}.

\bibitem[\protect\citeauthoryear{Krakovna}{2018}]{krakovna_2018}
Krakovna, V.
\newblock 2018.
\newblock Specification gaming examples in {AI}.
\newblock
  \path{https://vkrakovna.wordpress.com/2018/04/02/specification-gaming-examples-in-ai/}.

\bibitem[\protect\citeauthoryear{Lattimore and Hutter}{2014}]{Hutter:14tcdiscx}
Lattimore, T., and Hutter, M.
\newblock 2014.
\newblock General time consistent discounting.
\newblock {\em Theoretical Computer Science} 519:140--154.

\bibitem[\protect\citeauthoryear{Legg and Hutter}{2007}]{legg2007universal}
Legg, S., and Hutter, M.
\newblock 2007.
\newblock Universal intelligence: A definition of machine intelligence.
\newblock {\em Minds and machines} 17(4):391--444.

\bibitem[\protect\citeauthoryear{Minsky}{1961}]{minsky_1961}
Minsky, M.
\newblock 1961.
\newblock Steps toward artificial intelligence.
\newblock In {\em Proceedings of the IRE}, volume~49,  8–30.

\bibitem[\protect\citeauthoryear{Munroe}{2014}]{munroe2014if}
Munroe, R.
\newblock 2014.
\newblock {\em What If?: Serious Scientific Answers to Absurd Hypothetical
  Questions}.
\newblock Hachette UK.

\bibitem[\protect\citeauthoryear{Omohundro}{2008}]{omohundro_2008}
Omohundro, S.~M.
\newblock 2008.
\newblock The basic {AI} drives.
\newblock In {\em Artificial General Intelligence}, volume 171,  483–492.

\bibitem[\protect\citeauthoryear{OpenAI}{2019}]{leaderboard}
OpenAI.
\newblock 2019.
\newblock {Leaderboard}.
\newblock \path{https://github.com/openai/gym/wiki/Leaderboard}.

\bibitem[\protect\citeauthoryear{Orseau, Lattimore, and
  Hutter}{2013}]{Hutter:13ksaprob}
Orseau, L.; Lattimore, T.; and Hutter, M.
\newblock 2013.
\newblock Universal knowledge-seeking agents for stochastic environments.
\newblock In {\em Proc. 24th International Conf. on Algorithmic Learning Theory
  ({ALT'13})}, volume 8139 of {\em LNAI},  158--172.
\newblock Singapore: Springer.

\bibitem[\protect\citeauthoryear{Ring and Orseau}{2011}]{ring_orseau_2011}
Ring, M., and Orseau, L.
\newblock 2011.
\newblock Delusion, survival, and intelligent agents.
\newblock In {\em Artificial General Intelligence},  11–20.
\newblock Springer.

\bibitem[\protect\citeauthoryear{Schmidhuber}{2002}]{schmidhuber2002speed}
Schmidhuber, J.
\newblock 2002.
\newblock The speed prior: a new simplicity measure yielding near-optimal
  computable predictions.
\newblock In {\em International Conference on Computational Learning Theory},
  216--228.
\newblock Springer.

\bibitem[\protect\citeauthoryear{Shannon and
  Weaver}{1949}]{shannon_weaver_1949}
Shannon, C.~E., and Weaver, W.
\newblock 1949.
\newblock {\em The mathematical theory of communication}.
\newblock University of Illinois Press.

\bibitem[\protect\citeauthoryear{Solomonoff}{1964}]{solomonoff_1964}
Solomonoff, R.~J.
\newblock 1964.
\newblock A formal theory of inductive inference. part i.
\newblock {\em Information and Control} 7(1):1–22.

\bibitem[\protect\citeauthoryear{Sunehag and Hutter}{2015}]{Hutter:15ratagentx}
Sunehag, P., and Hutter, M.
\newblock 2015.
\newblock Rationality, optimism and guarantees in general reinforcement
  learning.
\newblock {\em Journal of Machine Learning Research} 16:1345--1390.

\bibitem[\protect\citeauthoryear{Taylor \bgroup et al\mbox.\egroup
  }{2016}]{taylor2016alignment}
Taylor, J.; Yudkowsky, E.; LaVictoire, P.; and Critch, A.
\newblock 2016.
\newblock Alignment for advanced machine learning systems.
\newblock {\em Machine Intelligence Research Institute}.

\bibitem[\protect\citeauthoryear{Veness \bgroup et al\mbox.\egroup
  }{2011}]{veness2011monte}
Veness, J.; Ng, K.~S.; Hutter, M.; Uther, W.; and Silver, D.
\newblock 2011.
\newblock A monte-carlo aixi approximation.
\newblock {\em Journal of Artificial Intelligence Research} 40:95--142.

\end{thebibliography}
\bibliographystyle{aaai}

\vfill
\pagebreak

\onecolumn
\appendix
\appendixpage

\section{Definitions and Notation -- Quick Reference}\label{AppendixNotation}

\begin{tabularx}{14.6cm}{|l|X|}
\multicolumn{2}{c}{\textbf{Notation used to define BoMAI}} \\
\hline
\textbf{Notation} & \textbf{Meaning} \\ \hline
$\mathcal{A}$, $\mathcal{O}$, $\mathcal{R}$ & the action/observation/reward spaces \\ \hline
$\mathcal{H}$ & $\mathcal{A} \times \mathcal{O} \times \mathcal{R}$ \\ \hline
$\m$ & the number of timesteps per episode \\ \hline
$h_{(i, j)}$ & $\in \mathcal{H}$; the interaction history in the $j$\textsuperscript{th} timestep of the $i$\textsuperscript{th} episode \\ \hline
$h_{<(i, j)}$ & $(h_{(0, 0)}, h_{(0, 1)}, ... , h_{(0, m-1)}, h_{(1, 0)}, ... , h_{(i, j-1)})$ \\ \hline
$\hlessi$ & $h_{<(i, 0)}$; the interaction history before episode $i$ \\ \hline
$\hi$ & $(h_{(0, 0)}, h_{(0, 1)}, ... , h_{(0, m-1)})$; the interaction history of episode $i$ \\ \hline
$a_{...}$, $o_{...}$, $r_{...}$ & likewise as for $h_{...}$ \\ \hline
$\ei$ & $\in \{0, 1\}$; indicator variable for whether episode $i$ is exploratory \\ \hline
$\nu$, $\mu$ & world-models stochastically mapping $\mathcal{H}^* \times \mathcal{A} \rightsquigarrow \mathcal{O} \times \mathcal{R}$ \\ \hline
$\mu$ & the true world-model/environment \\ \hline
$\nukl$ & the world-model simulated by the $k$\textsuperscript{th} Turing machine restricted to $\ell$ cells on the bounded work tape \\ \hline
$\M$ & $\{\nukl \ | \ k, \ell \in \mathbb{N}\}$; the set of world-models BoMAI considers \\ \hline
$\pi$ & a policy stochastically mapping $\mathcal{H}^* \rightsquigarrow \mathcal{A}$ \\ \hline
$\pih$ & the human mentor's policy \\ \hline
$\Policies$ & the set of policies that BoMAI considers the human mentor might be executing \\ \hline
$\p^\pi_\nu$ & a probability measure over histories with actions sampled from $\pi$ and observations and rewards sampled from $\nu$ \\ \hline
$\E^\pi_\nu$ & the expectation when the interaction history is sampled from $\p^\pi_\nu$ \\ \hline
$\w(\nu)$ & the prior probability that BoMAI assigns to $\nu$ being the true world-model \\ \hline
$\w(\pi)$ & the prior probability that BoMAI assigns to $\pi$ being the human mentor's policy \\ \hline
$\w(\nukl)$ & proportional to $\beta^\ell$; dependance on $k$ is left unspecified \\ \hline
$\w(\nu | h_{<(i, j)})$ & the posterior probability that BoMAI assigns to $\nu$ after observing interaction history $h_{<(i, j)}$ \\ \hline
$\w(\pi | h_{<(i, j)}, \elessi)$ & the posterior probability that BoMAI assigns to the human mentor's policy being $\pi$ after observing interaction history $h_{<(i, j)}$ and an exploration history $\elessi$ \\ \hline
$\nuhati$ & the maximum a posteriori world-model at the start of episode $i$ \\ \hline
$V_{\nu}^{\pi}(\hlessi)$ & $\E_{\nu}^{\pi}[\sum_{0 \leq j < \m} r_{(i, j)} | \hlessi]$; the value of executing a policy $\pi$ in an environment $\nu$ \\ \hline
$\pistar(\cdot | h_{<(i, j)})$ & $[\argmax_{\pi \in \Pi} V_{\nuhati}^{\pi}(\hlessi)](\cdot | h_{<(i, j)})$; the $\nuhati$-optimal policy for maximizing reward in episode $i$ \\ \hline
$\w\!\!\left(\p^\pi_\nu | h_{<(i, j)}, e_{\leq i}\right)\!\!$ & $\w(\pi | h_{<(i, j)}, e_{\leq i}) \w(\nu | h_{<(i, j)})$ \\ \hline
$\Bayes(\cdot | \hlessi, \elessi)$ & $\sum_{\nu \in \M, \pi \in \Policies} \w\left(\p^\pi_\nu | \hlessi, \elessi\right) \p^\pi_\nu\left(\cdot | \hlessi\right)$; the Bayes mixture distribution for an exploratory episode \\ \hline
$\IG(\hlessi, \elessi)$ & $\E_{\hi \sim \Bayes(\cdot | \hlessi, \elessi)} \sum_{(\nu, \pi) \in \M \times \Policies} \w\left(\p^\pi_\nu |h_{<i+1}, \elessione\right) \log \frac{\w\left(\p^\pi_\nu |h_{<i+1}, \elessione\right)}{\w\left(\p^\pi_\nu |\hlessi, \elessi\right)}$; the expected information gain if BoMAI explores \\ \hline
$\etahov$ & an exploration constant \\ \hline
$\pexp(\hlessi, \elessi)$ & $\min \{1, \etahov \IG(\hlessi, \elessi)\}$; the exploration probability for episode $i$ \\ \hline
$\pib(\cdot | h_{<(i, j)}, \ei)$ & $\begin{cases}
\pistar(\cdot | h_{<(i, j)}) & \textrm{if } \ei = 0 \\
\pih(\cdot | h_{<(i, j)}) & \textrm{if } \ei = 1
\end{cases}$; BoMAI's policy \\ \hline
\end{tabularx}

\begin{tabularx}{14cm}{|l|X|}
\multicolumn{2}{c}{\textbf{Notation used for intelligence proofs}} \\ \hline
$\pitwid$, $\nutwid$ & defined so that $\p^{\pitwid}_{\nutwid} = \Bayes$ \\ \hline
$\pi \primehov(\cdot | h_{<(i, j)}, \ei)$ & $\begin{cases}
\pistar(\cdot | h_{<(i, j)}) & \textrm{if } \ei = 0 \\
\pi(\cdot | h_{<(i, j)}) & \textrm{if } \ei = 1
\end{cases}$ \\ \hline
$\mathrm{Ent}$ & the entropy (of a distribution) \\ \hline
$\omegahov$ & (very sparingly used) the infinite interaction history \\ \hline
$\hoverline$ & a counterfactual interaction history \\ \hline
\end{tabularx}

\section{Proofs of Intelligence Results}\label{AppendixB}

\thmone*

Some notation that will be used in the proof is as follows. For an arbitrary policy $\pi$, $\pi'$ is the policy that mimics $\pi$ if the latest $\ei = 1$, and mimics $\pistar$ otherwise. Note then that $\pib = (\pih) \primehov$. $\nutwid$ is the Bayes mixture over world-models, and $\pitwid$ is the Bayes mixture over human-mentor policies, defined such that $\Bayes(\cdot) = \p^{\pitwid}_{\nutwid}(\cdot)$.

Before proving the Limited Exploration Theorem, we first prove an elementary lemma. It is essentially Bayes' rule, but modified slightly since non-exploratory episodes don't cause any updates to the posterior over the human mentor's policy.

\begin{lemma} \label{lem:bayesrule}
$$\w\left(\p^\pi_\nu | \hlessi, \elessi\right) = \frac{\w\left(\p^\pi_\nu\right) \p^{\pi \primehov}_\nu(\hlessi, \elessi)}{\p^{\pitwid \primehov}_{\nutwid}(\hlessi, \elessi)}$$
\end{lemma}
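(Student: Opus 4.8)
The plan is to reduce the statement to the definition $\w\left(\p^\pi_\nu | \hlessi, \elessi\right) = \w(\pi | \hlessi, \elessi)\,\w(\nu | \hlessi)$ and to verify that the claimed ratio factors into exactly these two posteriors. First I would expand the joint measure $\p^{\pi\primehov}_\nu(\hlessi, \elessi)$ by the chain rule for the induced probability over the combined interaction-and-exploration history. Since $\pi\primehov$ follows $\pistar$ whenever the current $\ei = 0$ and follows $\pi$ whenever $\ei = 1$, this product splits into four groups: (i) the exploration-decision likelihoods $\prod_{i' < i} \p(e_{i'} | h_{<i'}, e_{<i'})$ from the $\mathrm{Bernoulli}(\pexp)$ draws; (ii) the action factors $\pistar(a | \cdot)$ contributed by the non-exploratory episodes; (iii) the action factors $\pi(a | \cdot)$ contributed by the exploratory episodes; and (iv) the observation/reward factors $\nu(o, r | \cdot)$ over all timesteps. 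The key observation is that groups (i) and (ii) depend only on the realized history --- $\pexp$ and $\pistar$ (through $\nuhati$) are fixed functions of $h_{<i'}$ and $e_{<i'}$ --- and hence are independent of the indices $(\pi, \nu)$.

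Next I would handle the denominator. Because $\nutwid$ and $\pitwid$ are the prior mixtures (fixed so that $\p^{\pitwid}_{\nutwid} = \Bayes$ as measures), $\p^{\pitwid\primehov}_{\nutwid}(\hlessi, \elessi)$ is the prior-weighted double mixture $\sum_{\nu \in \M,\, \pi \in \Policies} \w(\pi)\w(\nu)\,\p^{\pi\primehov}_\nu(\hlessi, \elessi)$. Substituting the factorization above and pulling the history-only factors (i) and (ii) outside the sum, the remaining double sum separates into a product of $\sum_{\pi} \w(\pi) \prod_{\text{expl.}} \pi(a | \cdot)$ and $\sum_{\nu} \w(\nu) \prod_{\text{all}} \nu(o, r | \cdot)$. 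Forming the ratio in the lemma then cancels the shared factors (i) and (ii), leaving the product of $\big[\w(\pi) \prod_{\text{expl.}} \pi(a | \cdot) \big/ \sum_{\pi'} \w(\pi') \prod_{\text{expl.}} \pi'(a | \cdot)\big]$ and $\big[\w(\nu) \prod_{\text{all}} \nu(o, r | \cdot) \big/ \sum_{\nu'} \w(\nu') \prod_{\text{all}} \nu'(o, r | \cdot)\big]$.

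Finally I would identify each bracket with a posterior. The first is exactly $\w(\pi | \hlessi, \elessi)$, since by Bayes' rule the policy posterior is proportional to the prior times the likelihood of the mentor's actions observed only on exploratory episodes. The second is $\w(\nu | \hlessi)$: multiplying its numerator and denominator by the full action-likelihood $\prod_{\text{all}} \pi(a | \cdot)$ recovers $\w(\nu)\,\p^\pi_\nu(\hlessi) \big/ \p^\pi_{\nutwid}(\hlessi)$, which is the world-model posterior and is policy-independent precisely because those action factors cancel, as already noted after its definition. The product of the two brackets equals $\w\left(\p^\pi_\nu | \hlessi, \elessi\right)$ by definition, which closes the argument.

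I expect the difficulty to be bookkeeping rather than conceptual. The one point needing care is the justification that $\p^{\pitwid\primehov}_{\nutwid}$ is the \emph{prior}-weighted double mixture of the $\p^{\pi\primehov}_\nu$, and that the exploration-decision factors and the non-exploratory $\pistar$-factors are genuinely identical across all summands so that they cancel cleanly --- this rests on $\pexp$ and $\nuhati$ being deterministic functions of the observed history and exploration record, shared by $\pi\primehov$ and $\pitwid\primehov$ alike.
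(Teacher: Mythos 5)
Your proposal is correct and is essentially the paper's own argument read in the opposite direction: the paper starts from the definition $\w(\p^\pi_\nu \mid \hlessi, \elessi) = \w(\pi \mid \hlessi, \elessi)\,\w(\nu \mid \hlessi)$ and builds up to the ratio of joint measures by inserting the $\pistar$-action and exploration-probability factors (which cancel top and bottom), whereas you factor the joint measures down and cancel the same history-only factors. The key observations are identical in both: the exploration draws and the $\pistar$ factors are fixed functions of the realized history shared by every summand, and the two remaining brackets are the policy and world-model posteriors.
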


\begin{proof}
\begin{align*}
    \w\left(\p^\pi_\nu | \hlessi, \elessi\right) &= \w(\pi | \hlessi, \elessi) \w(\nu | \hlessi)
    \\
    &\equal^{(a)} \w(\pi) \prod_{0 \leq i'
    < i, e_{i'} = 1} \frac{ \pi(a_{i'} |h_{<i'}o\!r_{i'})}
    {\pitwid(a_{i'} |h_{<i'}o\!r_{i'})} \w(\nu | \hlessi)
    \\
    &\equal^{(b)} \w(\pi) \prod_{0 \leq i' < i, e_{i'} = 1} \frac{ \pi \primehov(a_{i'} |h_{<i'}o\!r_{i'}, e_{i'})}
    {\pitwid \primehov(a_{i'} |h_{<i'}o\!r_{i'}, e_{i'})} \w(\nu | \hlessi)
    \\
    &\equal^{(c)} \w(\pi) \prod_{0 \leq i' < i} \frac{ \pi \primehov(a_{i'} |h_{<i'}o\!r_{i'}, e_{i'})}
    {\pitwid \primehov(a_{i'} |h_{<i'}o\!r_{i'}, e_{i'})} \w(\nu | \hlessi)
    \\
    &= \frac{\w(\pi) \pi \primehov(a_{<i}| o\!r_{<i}, \elessi)} 
    {\pitwid \primehov(a_{<i}| o\!r_{<i}, \elessi)} \w(\nu | \hlessi)
    \\
    &\equal^{(d)} \frac{\w(\pi)\w(\nu) \pi \primehov(a_{<i}| o\!r_{<i}, \elessi) \nu(o\!r_{<i}| a_{<i})} 
    {\pitwid \primehov(a_{<i}| o\!r_{<i}, \elessi) \nutwid(o\!r_{<i}| a_{<i})}
    \\
    &\equal^{(e)} \frac{\w\left(\p^\pi_\nu\right) \p^{\pi \primehov}_\nu(\hlessi| \elessi)} 
    {\p^{\pitwid \primehov}_{\nutwid}(\hlessi| \elessi)}
    \\
    &\equal^{(f)} \frac{\w\left(\p^\pi_\nu\right) \p^{\pi \primehov}_\nu(\hlessi| \elessi) \prod_{i' \leq i, e_{i'} = 1} \pexp(h_{<i'}, e_{<i'}) \prod_{i' \leq i, e_{i'} = 0}(1-\pexp(h_{<i'}, e_{<i'}))} 
    {\p^{\pitwid \primehov}_{\nutwid}(\hlessi| \elessi) \prod_{i' \leq i, e_{i'} = 1} \pexp(h_{<i'}, e_{<i'}) \prod_{i' \leq i, e_{i'} = 0}(1-\pexp(h_{<i'}, e_{<i'}))}
    \\
    &\equal^{(g)} \frac{\w\left(\p^\pi_\nu\right) \p^{\pi \primehov}_\nu(\hlessi, \elessi)}
    {\p^{\pitwid \primehov}_{\nutwid}(\hlessi, \elessi)}
    \tagaligneq
\end{align*}

where (a) follows from Bayes' rule,\footnote{Note that observations appear in the conditional because $a_{i'} = (a_{(i', 0)}, ... , a_{(i', m-1)})$, so the actions must be conditioned on the interleaved observations and rewards.} (b) follows because $\pi = \pi \primehov$ when $e_{i'} = 1$, (c) follows because $\pi \primehov = \pitwid \primehov$ when $e_{i'} = 0$, (d) follows from Bayes' rule, (e) follows from the definition of $\p^\pi_\nu$, (f) follows by multiplying the top and bottom by the same factor, and (g) follows from the chain rule of conditional probabilities.
\end{proof}

We now turn to the proof of the Limited Exploration Theorem.

\begin{proof}[Proof of Theorem 1]
We aim to show $\E^{\pib}_{\mu} \sum_{i = 0}^\infty \pexp(\hlessi, \elessi)^2 < \infty$. Recalling $\pib = (\pih)\primehov$, we begin:

\begin{align*}\label{ineq6}
    &\ \ \ \ \w(\pih)w({\mu}) \E^{(\pih) \primehov}_{\mu} \sum_{i = 0}^\infty \pexp(\hlessi, \elessi)^2
    \\
    &\lequal^{(a)} \sum_{\nu, \pi \in \M \times \Policies} \w(\pi)\w(\nu) \E^{\pi \primehov}_\nu \sum_{i = 0}^\infty \pexp(\hlessi, \elessi)^2
    \\
    &\equal^{(b)} \E^{\pitwid \primehov}_{\nutwid} \sum_{i = 0}^\infty \pexp(\hlessi, \elessi)^2
    \\
    &\lequal^{(c)} \sum_{i = 0}^\infty \E^{\pitwid \primehov}_{\nutwid} \pexp(\hlessi, \elessi) \etahov \IG(\hlessi, \elessi)
    \\
    &\equal^{(d)} \sum_{i = 0}^\infty \E_{\hlessi, \elessi \sim \p^{\pitwid \primehov}_{\nutwid}} \left[\pexp(\hlessi, \elessi) \etahov \IG(\hlessi, \elessi)\right]
    \\
    &\equal^{(e)} \etahov \sum_{i = 0}^\infty \E_{\hlessi, \elessi \sim \p^{\pitwid \primehov}_{\nutwid}} \biggr[ \pexp(\hlessi, \elessi) \E_{\hi \sim \Bayes(\cdot | \hlessi, \elessione)} \big[
    \\
    &\hspace{10mm} \sum_{(\nu, \pi) \in \M \times \Policies} \w\left(\p^\pi_\nu |h_{<i+1}, \elessione\right) \log \frac{\w\left(\p^\pi_\nu |h_{<i+1}, \elessione\right)}{\w\left(\p^\pi_\nu |\hlessi, \elessi\right)}\big]\biggr]
    \\
    &\equal^{(f)} \etahov \sum_{i = 0}^\infty \E_{\hlessi, \elessi \sim \p^{\pitwid \primehov}_{\nutwid}} \biggr[ \pexp(\hlessi, \elessi) \E_{\hi \sim \p^{\pitwid \primehov}_{\nutwid}(\cdot | \hlessi, \elessione)} \big[
    \\
    &\hspace{10mm} \sum_{(\nu, \pi) \in \M \times \Policies} \w\left(\p^\pi_\nu |h_{<i+1}, \elessione\right) \log \frac{\w\left(\p^\pi_\nu |h_{<i+1}, \elessione\right)}{\w\left(\p^\pi_\nu |\hlessi, \elessi\right)}\big]\biggr]
    \\
    &\lequal^{(g)} \etahov \sum_{i = 0}^\infty \E_{\hlessi, \elessi \sim \p^{\pitwid \primehov}_{\nutwid}} \biggr[  \E_{\hi, \ei \sim \p^{\pitwid \primehov}_{\nutwid}(\cdot | \hlessi, \elessi)} \big[
    \\
    &\hspace{10mm} \sum_{(\nu, \pi) \in \M \times \Policies} \w\left(\p^\pi_\nu |h_{<i+1}, e_{<i+1}\right) \log \frac{\w\left(\p^\pi_\nu |h_{<i+1}, e_{<i+1}\right)}{\w\left(\p^\pi_\nu |\hlessi, \elessi\right)}\big]\biggr]
    \\
    &\equal^{(h)} \etahov \sum_{i = 0}^\infty \E^{\pitwid \primehov}_{\nutwid} \sum_{(\nu, \pi) \in \M \times \Policies} \w\left(\p^\pi_\nu |h_{<i+1}, e_{<i+1}\right) \log \frac{\w\left(\p^\pi_\nu |h_{<i+1}, e_{<i+1}\right)}{\w\left(\p^\pi_\nu |\hlessi, \elessi\right)}
    \\
    &\equal^{(i)} \etahov \sum_{i = 0}^\infty \sum_{(\nu, \pi) \in \M \times \Policies} \E^{\pitwid \primehov}_{\nutwid} \frac{\w\left(\p^\pi_\nu\right) \p^{\pi \primehov}_\nu(h_{<i+1}, e_{<i+1})}{\p^{\pitwid \primehov}_{\nutwid}(h_{<i+1}, e_{<i+1})} \log \frac{\w\left(\p^\pi_\nu |h_{<i+1}, e_{<i+1}\right)}{\w\left(\p^\pi_\nu |\hlessi, \elessi\right)}
    \\
    &\equal^{(j)} \etahov \sum_{i = 0}^\infty \sum_{(\nu, \pi) \in \M \times \Policies} \E^{\pi \primehov}_{\nu} \w\left(\p^\pi_\nu\right) \log \frac{\w\left(\p^\pi_\nu |h_{<i+1}, e_{<i+1}\right)}{\w\left(\p^\pi_\nu |\hlessi, \elessi\right)}
    \\
    &\equal^{(k)} \lim_{N \to \infty} \etahov \sum_{(\nu, \pi) \in \M \times \Policies} \w\left(\p^\pi_\nu\right) \E^{\pi \primehov}_{\nu} \sum_{i = 0}^N \log \frac{\w\left(\p^\pi_\nu |h_{<i+1}, e_{<i+1}\right)}{\w\left(\p^\pi_\nu |\hlessi, \elessi\right)}
    \\
    &\equal^{(l)} \lim_{N \to \infty} \etahov \sum_{(\nu, \pi) \in \M \times \Policies} \w\left(\p^\pi_\nu\right) \E^{\pi \primehov}_{\nu} \log \frac{\w\left(\p^\pi_\nu |h_{<(N+1, 0)}, e_{<N}\right)}{\w\left(\p^\pi_\nu |h_{<(0, 0)}, e_{<0}\right)}
    \\
    &\lequal^{(m)} \lim_{N \to \infty} \etahov \sum_{(\nu, \pi) \in \M \times \Policies} \w\left(\p^\pi_\nu\right) \log \frac{1}{\w\left(\p^\pi_\nu\right)}
    \\
    &\equal^{(n)} \etahov \ \textrm{Ent}(w) \lthan^{(o)} \infty
    \tagaligneq
\end{align*}
(a) follows because each term in the sum on the r.h.s. is positive, and the l.h.s. is one of those terms. (b) follows from the definitions of $\pitwid$ and $\nutwid$. (c) follows because the exploration probability is less than or equal to $\etahov$ times the information gain, by definition. (d) is just a change of notation. (e) replaces the information gain with its definition, where conditioning on $\elessione$ indicates that $\ei = 1$ in that conditional. (f) follows because $\Bayes = \p^{\pitwid}_{\nutwid}$ and $\p^{\pitwid}_{\nutwid} = \p^{\pitwid \primehov}_{\nutwid}$ when $\ei = 1$. (g) follows because $\E[X] \geq \E[X|Y]\p(Y)$ for $X \geq 0$; in this case $Y = [\ei = 1]$, and $X$ is a KL-divergence. (h) condenses the two expectations into one. (i) follows from Lemma \ref{lem:bayesrule}, and reordering the sum and the expectation. (j) follows from the definition of the expectation, and canceling. (k) follows from the definition of an infinite sum, and rearranging. (l) follows from cancelling the numerator in the $i$\textsuperscript{th} term of the sum with the denominator in $i+1$\textsuperscript{th} term. (m) follows from the posterior weight on $\p^\pi_\nu$ being less than or equal to 1; note that $\w(\p^\pi_\nu |h_{<(0, 0)}, e_{<0}) = \w(\p^\pi_\nu)$ because nothing is actually being conditioned on. (n) is just the definition of the entropy, and $w$ is constructed to satisfy (o).

Finally, rearranging Inequality \ref{ineq6} gives $\E^{\pib}_{\mu} \sum_{i = 1}^\infty \pexp(\hlessi, \elessi)^2 \leq \frac{\etahov \textrm{Ent}(w)}{\w(\pih)w({\mu})} < \infty$

This proof was inspired in part by \citepos{Hutter:13ksaprob} proofs of Theorems 2 and 5.
\end{proof}

Next, we show that prediction converges on-star-policy and on-human-policy, for which we need the following lemma:

\begin{lemma} \label{lem:credenceontruth}
The posterior probability mass on the truth is bounded below by a positive constant with probability 1.
$$\inf_{i \in \mathbb{N}} \w\left(\p^{\pih}_{\mu} \vb \hlessi, \elessi\right) > 0 \ \ \ \textrm{w.$\p^{\pib}_{\mu}$-p.1}$$
\end{lemma}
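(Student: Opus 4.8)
The plan is to recognize the posterior mass on the truth as a constant multiple of the reciprocal of a likelihood-ratio process, and then show that process stays bounded via the (super)martingale convergence theorem. First I would instantiate Lemma \ref{lem:bayesrule} at the true policy-environment pair, obtaining
\[
\w\!\left(\p^{\pih}_{\mu} \va \hlessi, \elessi\right) = \frac{\w\!\left(\p^{\pih}_{\mu}\right)\,\p^{(\pih)\primehov}_{\mu}(\hlessi, \elessi)}{\p^{\pitwid\primehov}_{\nutwid}(\hlessi, \elessi)} = \frac{\w\!\left(\p^{\pih}_{\mu}\right)}{M_i},
\]
where I set $M_i := \p^{\pitwid\primehov}_{\nutwid}(\hlessi, \elessi)\,/\,\p^{(\pih)\primehov}_{\mu}(\hlessi, \elessi)$ and note $\w(\p^{\pih}_{\mu}) = \w(\pih)\w(\mu) > 0$ by the Prior Support Assumption. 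Since $\pib = (\pih)\primehov$, the data-generating measure is $\p^{(\pih)\primehov}_{\mu} = \p^{\pib}_{\mu}$, so it suffices to establish $\sup_i M_i < \infty$ with $\p^{\pib}_{\mu}$-probability $1$; the conclusion then reads $\inf_i \w(\p^{\pih}_{\mu} \va \hlessi, \elessi) = \w(\p^{\pih}_{\mu})/\sup_i M_i > 0$.

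Two facts drive the argument. First, because posteriors sum to one, Lemma \ref{lem:bayesrule} forces the normalizer to be the prior mixture, $\p^{\pitwid\primehov}_{\nutwid}(\hlessi,\elessi) = \sum_{\nu \in \M, \pi \in \Policies} \w(\pi)\w(\nu)\,\p^{\pi\primehov}_{\nu}(\hlessi,\elessi)$, and the truth occurs as one summand with weight $\w(\pih)\w(\mu)$. Hence pointwise $\p^{\pitwid\primehov}_{\nutwid} \ge \w(\pih)\w(\mu)\,\p^{(\pih)\primehov}_{\mu}$, giving $M_i \ge \w(\p^{\pih}_{\mu}) > 0$ for every $i$ (in particular $M_i$ is well-defined and positive $\p^{\pib}_{\mu}$-a.s.). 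Second, $M_i$ is exactly the likelihood ratio of the mixture measure to the true measure evaluated on the first $i$ episodes, so the standard one-step calculation—summing over the episode-$i$ outcomes $(\ei,\hi)$ under the true conditional and using that the mixture's one-step predictive distribution is properly normalized—yields $\E^{\pib}_{\mu}\!\left[M_{i+1} \va \hlessi, \elessi\right] \le M_i$. Thus $(M_i)$ is a nonnegative supermartingale with respect to the filtration generated by the joint interaction-and-exploration history.

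By the supermartingale convergence theorem, $(M_i)$ converges $\p^{\pib}_{\mu}$-almost surely to a finite limit, and an almost surely convergent sequence is almost surely bounded, so $\sup_i M_i < \infty$ w.$\p^{\pib}_{\mu}$-p.$1$. Combined with the lower bound $M_i \ge \w(\p^{\pih}_{\mu})$, this gives $\inf_i \w(\p^{\pih}_{\mu} \va \hlessi, \elessi) = \w(\p^{\pih}_{\mu})/\sup_i M_i > 0$ w.$\p^{\pib}_{\mu}$-p.$1$, as claimed.

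The step I expect to be the main obstacle is the careful verification of the supermartingale inequality: one must fix the correct filtration $\sigma(\hlessi,\elessi)$ over the combined history, confirm that the Bayes-mixture one-step predictive is genuinely normalized (so the sum over next outcomes is $\le 1$, delivering the inequality rather than merely a finite bound), and keep track of the exploration bits $\ei$ being sampled from $\mathrm{Bernoulli}(\pexp)$ as part of each increment. Note that the absolute-continuity direction $\p^{\pib}_{\mu} \ll \p^{\pitwid\primehov}_{\nutwid}$—which holds precisely because the truth is a mixture component—is what makes $M_i$ finite and positive at each stage; the (super)martingale argument is needed only to upgrade this pointwise control to the uniform-in-$i$ bound $\sup_i M_i < \infty$.
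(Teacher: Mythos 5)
Your proposal is correct and follows essentially the same route as the paper: your $M_i$ is exactly $\w(\p^{\pih}_{\mu})\cdot z_i$ where $z_i := \w(\p^{\pih}_{\mu}\mid\hlessi,\elessi)^{-1}$ is the process the paper analyzes, and the paper likewise applies Lemma \ref{lem:bayesrule} and the one-step normalization of the mixture's predictive distribution to show $z_i$ is a (in fact exact) martingale, then invokes martingale convergence to bound it almost surely. The only cosmetic difference is that you settle for the supermartingale inequality where the paper's computation yields equality.
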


\begin{proof}
If $\w\left(\p^{\pih}_{\mu} \vb \hlessi, \elessi\right) = 0$ for some $i$, then $\p^{\pib}_{\mu}(\hlessi, \elessi) = 0$, so with $\p^{\pib}_{\mu}$-probability 1, 
$\inf_{i \in \mathbb{N}} \w\left(\p^{\pih}_{\mu} \vb \hlessi, \elessi\right) = 0 \implies \liminf_{i \in \mathbb{N}} \w\left(\p^{\pih}_{\mu} \vb \hlessi, \elessi\right) = 0$ which in turn implies $\limsup_{i \in \mathbb{N}} \w\left(\p^{\pih}_{\mu} \vb \hlessi, \elessi\right)^{-1} = \infty$. We show that this has probability 0.

Let $z_i := \w\left(\p^{\pih}_{\mu} \vb \hlessi, \elessi\right)^{-1}$. We show that $z_i$ is a $\p^{\pib}_{\mu}$-martingale.

\begin{align*}
    \E^{\pib}_{\mu}\left[z_{i+1} | \hlessi, \elessi\right] &\equal^{(a)} \E^{(\pih) \primehov}_{\mu}\left[\w\left(\p^{\pih}_{\mu} \vb h_{<i+1}, e_{<i+1}\right)^{-1} \vc \hlessi, \elessi \right]
    \\
    &\equal^{(b)} \sum_{\hi, \ei}\p^{(\pih) \primehov}_{\mu} (\hi, \ei|\hlessi, \elessi)\left[\frac{\p^{\pitwid \primehov}_{\nutwid}(h_{<i+1}, e_{<i+1})}{\w\left(\p^{\pih}_{\mu}\right) \p^{(\pih) \primehov}_{\mu}(h_{<i+1}, e_{<i+1})}\right]
    \\
    &\equal^{(c)} \sum_{\hi, \ei}\frac{\p^{\pitwid \primehov}_{\nutwid}(h_{<i+1}, e_{<i+1})}{\w\left(\p^{\pih}_{\mu}\right) \p^{(\pih) \primehov}_{\mu}(\hlessi, \elessi)}
    \\
    &\equal^{(d)} \sum_{\hi, \ei}\p^{\pitwid \primehov}_{\nutwid}(h_{i}, e_{i} | \hlessi, \elessi ) \frac{\p^{\pitwid \primehov}_{\nutwid}(\hlessi, \elessi)}{\w\left(\p^{\pih}_{\mu}\right) \p^{(\pih) \primehov}_{\mu}(\hlessi, \elessi)}
    \\
    &\equal^{(e)} \frac{\p^{\pitwid \primehov}_{\nutwid}(\hlessi, \elessi)}{\w\left(\p^{\pih}_{\mu}\right) \p^{(\pih) \primehov}_{\mu}(\hlessi, \elessi)}
    \\
    &\equal^{(f)} \w\left(\p^{\pih}_{\mu} \vb \hlessi, \elessi\right)^{-1}
    \\
    &= z_i
    \tagaligneq
\end{align*}
where (a) follows from the definitions of $z_i$ and $\pib$, (b) follows from Lemma \ref{lem:bayesrule}, (c) follows from multiplying the numerator and denominator by $\p^{(\pih) \primehov}_{\mu}(\hlessi, \elessi)$ and cancelling, (d) follows from expanding the numerator, (e) follows because $\p^{\pitwid \primehov}_{\nutwid}$ is a measure, and (f) follows from Lemma \ref{lem:bayesrule}, completing the proof that $z_i$ is martingale.

By the martingale convergence theorem $z_i \to f(\omega) < \infty \ \ \mathrm{w.p. 1}$, for $\omega \in \Omega$, the sample space, and some $f: \Omega \to \mathbb{R}$, so the probability that $\limsup_{i \in \mathbb{N}} \w\left(\p^{\pih}_{\mu} \vb \hlessi, \elessi\right)^{-1} = \infty$ is 0, completing the proof.
\end{proof}

We introduce an additional notational convention for the statement of the next theorem. $\hoverline$ indicates a counterfactual interaction history. Without the bar, $\hi$ (for example) is usually understood to be sampled from $\p^{\pib}_{\mu}$, so the bar indicates that this is not the case, as in the theorem below.

\thmtwo*

\begin{proof}
We show that when the absolute difference between the above probabilities is larger than $\varepsilon$, the exploration probability is larger than $\varepsilon'$, a function of $\varepsilon$, with probability 1. Since the exploration probability goes to $0$ with probability $1$, so does this difference. We let $z(\omegahov)$ denote $\inf_{i \in \mathbb{N}} \w\left(\p^{\pih}_{\mu} \vb \hlessi, \elessi\right)$, where $\omegahov$ is the infinite interaction history.

Suppose for some $\hoverline_{i \hspace{0mm}}$, which will stay fixed for the remainder of the proof, that
\begin{equation}
    \vb \p^{\pih}_{{\mu}}\left(\hoverline_{i \hspace{0mm}} \va \hlessi\right) - \p^{\pih}_{\nuhati}\left(\hoverline_{i \hspace{0mm}} \va \hlessi\right) \vb \geq \varepsilon > 0
\end{equation}
Then at least one of the terms is greater than $\varepsilon$ since both are non-negative. Suppose it is the ${\mu}$ term. Then,
\begin{align*}
    \Bayes\left(\hoverline_{i \hspace{0mm}} \va \hlessi\right)
    &\geq \w\left(\p^{\pih}_{\mu} \vb \hlessi, \elessi\right) \p^{\pih}_{\mu}\left(\hoverline_{i \hspace{0mm}} \va \hlessi\right)
    \\
    &\geq z(\omegahov) \varepsilon
    \tagaligneq\label{EqrefA}
\end{align*}
Suppose instead it is the $\nuhati$ term.
\begin{align*}
    \Bayes\left(\hoverline_{i \hspace{0mm}} \va \hlessi\right)
    &\geq \w\left(\p^{\pih}_{\nuhati} \vb \hlessi, \elessi\right) \p^{\pih}_{\nuhati}\left(\hoverline_{i \hspace{0mm}} \va \hlessi\right)
    \\
    &\gequal^{(a)} \w\left(\p^{\pih}_{{\mu}} \vb \hlessi, \elessi\right) \p^{\pih}_{\nuhati}\left(\hoverline_{i \hspace{0mm}} \va \hlessi\right)
    \\
    &\geq z(\omegahov) \varepsilon
    \tagaligneq\label{EqrefC}
\end{align*}
where $(a)$ follows from the fact that $\nuhati$ is maximum a posteriori: $\frac{\w\left(\p^{\pih}_{\nuhati} \va \hlessi, \elessi\right)}{\w\left(\p^{\pih}_{{\mu}} \va \hlessi, \elessi\right)} = \frac{\w\left(\nuhati | \hlessi\right)}{w({\mu} | \hlessi)} \geq 1$.

Next, we consider how the posterior on ${\mu}$ and $\nuhati$ changes if the interaction history for episode $i$ is $\hoverline_{i \hspace{0mm}}$. Assign $\nu_0$ and $\nu_1$ to ${\mu}$ and  $\nuhati$ so that $\p^{\pih}_{\nu_0}\left(\hoverline_{i \hspace{0mm}} \va \hlessi\right) < \p^{\pih}_{\nu_1}\left(\hoverline_{i \hspace{0mm}} \va \hlessi\right)$.

\begin{align*}
\frac{\w\left(\nu_1 | \hlessi\hoverline_{i \hspace{0mm}}\right)}
{\w\left(\nu_0 | \hlessi\hoverline_{i \hspace{0mm}}\right)}
&\equal^{(a)}
\frac{\w\left(\nu_1 | \hlessi\right) \nu_1\left(\overline{o\!r}_{i \hspace{0mm}} | \hlessi\overline{a}_{i \hspace{0mm}}\right)}
{\w\left(\nu_0 | \hlessi\right) \nu_0\left(\overline{o\!r}_{i \hspace{0mm}} | \hlessi\overline{a}_{i \hspace{0mm}}\right)}
\\
&\equal
\frac{\w\left(\nu_1 | \hlessi\right) \p^{\pih}_{\nu_1}\left(\hoverline_{i \hspace{0mm}}|\hlessi\right)}
{\w\left(\nu_0 | \hlessi\right) \p^{\pih}_{\nu_0}\left(\hoverline_{i \hspace{0mm}}|\hlessi\right)}
\\
&\gequal^{(b)}
\frac{\w\left(\nu_1 | \hlessi\right)}{\w\left(\nu_0 | \hlessi\right)} \frac{1}{1-\varepsilon}
\tagaligneq
\end{align*}
where (a) follows from Bayes' rule, and (b) follows because the ratio of two numbers between 0 and 1 that differ by at least $\varepsilon$ is at least $1/(1-\varepsilon)$, and the $\nu_1$ term is the larger of the two.

Thus, either
\begin{equation}
    \frac{\w\left(\nu_1 | \hlessi\hoverline_{i \hspace{0mm}}\right)}{\w\left(\nu_1 | \hlessi\right)} \geq \sqrt{\frac{1}{1-\varepsilon}} \ \ \ \textrm{or} \ \ \ \frac{\w\left(\nu_0 | \hlessi\hoverline_{i \hspace{0mm}}\right)}{\w\left(\nu_0 | \hlessi\right)} \leq \sqrt{1-\varepsilon}
\end{equation}
In the former case, $\w\left(\nu_1 | \hlessi\hoverline_{i \hspace{0mm}}\right) - \w\left(\nu_1 | \hlessi\right) \geq \left(\sqrt{1/(1-\varepsilon)} - 1\right)\w\left(\nu_1 | \hlessi\right) \geq \left(\sqrt{1/(1-\varepsilon)} - 1\right)z(\omegahov)$. Similarly, in the latter case, $\w(\nu_0 | \hlessi) - \w\left(\nu_0 | \hlessi\hoverline_{i \hspace{0mm}}\right) \geq \left(1 - \sqrt{1-\varepsilon}\right)z(\omegahov)$. Let $\nu_2$ be either $\nu_0$ or $\nu_1$ for whichever satisfies this constraint (and pick arbitrarily if both do). Then in either case,
\begin{equation}\label{EqrefB}
    \va \w\left(\nu_2 | \hlessi\right) - \w\left(\nu_2 | \hlessi\hoverline_{i \hspace{0mm}}\right) \va \geq \left(1 - \sqrt{1-\varepsilon}\right)z(\omegahov)
\end{equation}

Finally, since the posterior changes by an amount that is bounded below with a probability (according to $\Bayes$) that is bounded below, the expected information gain is bounded below, where all bounds are strictly positive with probability 1:
\begin{align*}
    \IG(\hlessi, \elessi) &= \E_{\hi \sim \Bayes(\cdot | \hlessi, \elessi)} \biggr[
    \\
    &\sum_{(\nu, \pi) \in \M \times \Policies} \w\left(\p^\pi_\nu \va h_{<i+1}, \elessione\right) \log \frac{\w\left(\p^\pi_\nu \va h_{<i+1}, \elessione\right)}{\w\left(\p^\pi_\nu \va \hlessi, \elessi\right)}\biggr]
    \\
    &\gequal^{(a)} \Bayes\left(\hoverline_{i \hspace{0mm}} \va \hlessi\right) \sum_{(\nu, \pi) \in \M \times \Policies} \w\left(\p^\pi_\nu \va \hlessi\hoverline_{i \hspace{0mm}}, \elessione\right) \ *
    \\
    &\hspace{8mm} \log \frac{\w\left(\p^\pi_\nu \va \hlessi\hoverline_{i \hspace{0mm}}, \elessione\right)}{\w\left(\p^\pi_\nu \va \hlessi, \elessi\right)}
    \\
    &\gequal^{(b)} z(\omegahov) \varepsilon \sum_{(\nu, \pi) \in \M \times \Policies} \w\left(\p^\pi_\nu \va \hlessi\hoverline_{i \hspace{0mm}}, \elessione\right) \ *
    \\
    &\hspace{8mm} \log \frac{\w\left(\p^\pi_\nu \va \hlessi\hoverline_{i \hspace{0mm}}, \elessione\right)}{\w\left(\p^\pi_\nu \va \hlessi, \elessi\right)}
    \\
    &\equal^{(c)} z(\omegahov) \varepsilon \sum_{(\nu, \pi) \in \M \times \Policies} \w(\nu |\hlessi\hoverline_{i \hspace{0mm}})\w\left(\pi |\hlessi\hoverline_{i \hspace{0mm}}, \elessione\right) \ *
    \\
    &\hspace{8mm} \log \frac{\w\left(\nu |\hlessi\hoverline_{i \hspace{0mm}}\right)\w\left(\pi |\hlessi\hoverline_{i \hspace{0mm}}, \elessione\right)}{\w\left(\nu |\hlessi\right)\w\left(\pi |\hlessi, \elessi\right)}
    \\
    &\equal z(\omegahov) \varepsilon \biggr[\sum_{\nu \in \M} \w\left(\nu |\hlessi\hoverline_{i \hspace{0mm}}\right) \log \frac{\w\left(\nu |\hlessi\hoverline_{i \hspace{0mm}}\right)}{\w\left(\nu |\hlessi\right)} +
    \\
    &\hspace{8mm} \sum_{\pi \in \Policies} \w\left(\pi |\hlessi\hoverline_{i \hspace{0mm}}, \elessione\right)\log \frac{\w\left(\pi |\hlessi\hoverline_{i \hspace{0mm}}, \elessione\right)}{\w\left(\pi |\hlessi, \elessi\right)}\biggr]
    \\
    &\gequal^{(d)} z(\omegahov) \varepsilon \sum_{\nu \in \M} \w\left(\nu |\hlessi\hoverline_{i \hspace{0mm}}\right) \log \frac{\w\left(\nu |\hlessi\hoverline_{i \hspace{0mm}}\right)}{\w\left(\nu |\hlessi\right)}
    \\
    &\gequal^{(e)} z(\omegahov) \varepsilon \sum_{\nu \in \M} \frac{1}{2}\left[\w\left(\nu |\hlessi\hoverline_{i \hspace{0mm}}\right) - \w\left(\nu |\hlessi\right)\right]^2
    \\
    &\gequal^{(f)} z(\omegahov) \varepsilon \frac{1}{2}\left[\w\left(\nu_2 |\hlessi\hoverline_{i \hspace{0mm}}\right) - \w\left(\nu_2 |\hlessi\right)\right]^2
    \\
    &\gequal^{(g)} \frac{1}{2} z(\omegahov)^3 \varepsilon \left(1 - \sqrt{1-\varepsilon}\right)^2
    \tagaligneq
\end{align*}
where (a) follows from $\E[X] \geq \E[X|Y]\p(Y)$ for non-negative $X$, and the non-negativity of the KL-divergence, (b) follows from Inequalities \ref{EqrefA} and \ref{EqrefC}, (c) follows from the posterior over $\nu$ not depending on $\elessi$, (d) follows from dropping the second term, which is non-negative as a KL-divergence, (e) follows from the entropy inequality, (f) follows from dropping all terms in the sum besides $\nu_2$, and (g) follows from Inequality \ref{EqrefB}.

This implies $\pexp(\hlessi, \elessi) \geq \min \{1, \frac{1}{2} \etahov z(\omegahov)^3 \varepsilon (1 - \sqrt{1-\varepsilon})^2 \}$. With probability 1, $z(\omegahov) > 0$ by Lemma \ref{lem:credenceontruth}, and with probability 1, $\pexp(\hlessi, \elessi)$ is not greater than $\varepsilon'$ infinitely often with probability 1, for all $\varepsilon' > 0$ by Theorem \ref{thm:limexp}. Therefore, with probability 1, $\max_{\hoverline_{i \hspace{0mm}}} \va \p^{\pih}_{{\mu}}\left(\hoverline_{i \hspace{0mm}} | \hlessi\right) - \p^{\pih}_{\nuhati}\left(\hoverline_{i \hspace{0mm}}|\hlessi\right) \va$ is not greater than $\varepsilon$ infinitely often, for all $\varepsilon > 0$, which completes the proof. Note that the citation of Lemma \ref{lem:credenceontruth} is what restricts this result to $\pih$.
\end{proof}

\thmthree*

\begin{proof}
This result follows straightforwardly from \citepos{Hutter:09mdltvp} result for sequence prediction that a maximum a posteriori estimate converges in total variation to the true environment when the true environment has nonzero prior.

Consider an outside observer predicting the entire interaction history with the following model-class and prior: $\mathcal{M}' = \left\{\p^{\pib}_\nu \ \va \ \nu \in \M\right\}$, $w'\left(\p^{\pib}_\nu\right) = \w(\nu)$. By definition, $w'\left(\p^{\pib}_\nu \va h_{<(i, j)}\right) = \w(\nu | h_{<(i, j)})$, so at any episode, the outside observer's maximum a posteriori estimate is $\p^{\pib}_{\nuhati}$. By Theorem 1 in \citep{Hutter:09mdltvp}, the outside observer's maximum a posteriori predictions approach the truth in total variation, so
\begin{equation}
    \lim_{i \to \infty} \max_{\hoverline_{i \hspace{0mm}}} \vb \p^{\pib}_{{\mu}}\left(\hoverline_{i \hspace{0mm}}|\hlessi\right) - \p^{\pib}_{\nuhati}\left(\hoverline_{i \hspace{0mm}}|\hlessi\right) \vb = 0 \ \ \textrm{w.$\p^{\pib}_{\mu}$-p.1}
\end{equation}

Since $\pexp \to 0$ with probability 1, $(1-\pexp)$ is eventually always greater than $1/2$, w.p.1, at which point $\va \p^{\pib}_{{\mu}}\left(\hoverline_{i \hspace{0mm}}|\hlessi\right) - \p^{\pib}_{\nuhati}\left(\hoverline_{i \hspace{0mm}}|\hlessi\right) \va \geq (1/2) \va \p^{\pistar}_{{\mu}}\left(\hoverline_{i \hspace{0mm}}|\hlessi\right) - \p^{\pistar}_{\nuhati}\left(\hoverline_{i \hspace{0mm}}|\hlessi\right) \va$. Therefore, with $\p^{\pib}_{\mu}$-probability 1,
$$\lim_{i \to \infty} \max_{\hoverline_{i \hspace{0mm}}} \vb \p^{\pistar}_{{\mu}}\left(\hoverline_{i \hspace{0mm}}|\hlessi\right) - \p^{\pistar}_{\nuhati}\left(\hoverline_{i \hspace{0mm}}|\hlessi\right) \vb = 0$$
\end{proof}

Given asymptotically optimal prediction on-star-policy and on-human-policy, it is straightforward to show that with probability 1, on-policy reward acquisition is at least $\varepsilon$ worse than on-human-policy reward acquisition only finitely often, for all $\varepsilon > 0$. Recalling that $V_{\nu}^{\pi}(\hlessi) := \E_{\nu}^{\pi}\left[\sum_{0 \leq j < \m} r_{(i, j)} \vb \hlessi\right]$ is the expected reward given a policy, world-model, and history,
\thmfour*

\begin{proof}
The maximal reward in an episode is uniformly bounded by $\m$, so from the On-Human-Policy and On-Star-Policy Optimal Prediction Theorems, we get analogous convergence results for the expected reward:
\begin{equation}
    \lim_{i \to \infty} \vb V_{{\mu}}^{\pistar}(\hlessi) - V_{\nuhati}^{\pistar}(\hlessi) \vb = 0 \ \ \textrm{w.$\p^{\pib}_{\mu}$-p.1}
\end{equation}
\begin{equation}
    \lim_{i \to \infty} \vb V_{{\mu}}^{\pih}(\hlessi) - V_{\nuhati}^{\pih}(\hlessi) \vb = 0 \ \ \textrm{w.$\p^{\pib}_{\mu}$-p.1}
\end{equation}

The key piece is that $\pistar \in \argmax_{\pi \in \Pi} V^\pi_{\nuhati}$, so
\begin{equation} \label{ineq:pistargood}
    V_{\nuhati}^{\pistar}(\hlessi) \geq V_{\nuhati}^{\pih}(\hlessi)
\end{equation} Supposing by contradiction that $V_{{\mu}}^{\pih}(\hlessi) - V_{{\mu}}^{\pistar}(\hlessi) >\varepsilon$ infinitely often, it follows that either $V_{\nuhati}^{\pistar}(\hlessi) - V_{{\mu}}^{\pistar}(\hlessi) > \varepsilon/2$ infinitely often or $V_{{\mu}}^{\pih}(\hlessi) - V_{\nuhati}^{\pistar}(\hlessi) > \varepsilon/2$ infinitely often. The first has $\p^{\pib}_{\mu}$-probability 0, and by Inequality \ref{ineq:pistargood}, the latter implies $V_{{\mu}}^{\pih}(\hlessi) - V_{\nuhati}^{\pih}(\hlessi) \geq \varepsilon/2$ infinitely often, which also has $\p^{\pib}_{\mu}$-probability 0.

This gives us
\begin{equation}
    \liminf_{i \to \infty} V_{{\mu}}^{\pistar}(\hlessi) - V_{{\mu}}^{\pih}(\hlessi) \geq 0 \ \ \textrm{w.$\p^{\pib}_{\mu}$-p.1.}
\end{equation}

Finally, $V_{{\mu}}^{\pib}(\hlessi) = \pexp(\hlessi) V_{{\mu}}^{\pih}(\hlessi) + (1 - \pexp(\hlessi))V_{{\mu}}^{\pistar}(\hlessi)$, and $\pexp(\hlessi) \to 0$, so we also have 

\begin{equation}
    \liminf_{i \to \infty} V_{{\mu}}^{\pib}(\hlessi) - V_{{\mu}}^{\pih}(\hlessi) \geq 0 \ \ \textrm{w.$\p^{\pib}_{\mu}$-p.1.}
\end{equation}

\end{proof}

\section{Design for ``The Box''}\label{AppendixBox}

We propose a construction of BoMAI’s box. It is elaborate but well within the budget of a nation or a large company.

\begin{figure}
    \centering
    \includegraphics[width=0.85\linewidth]{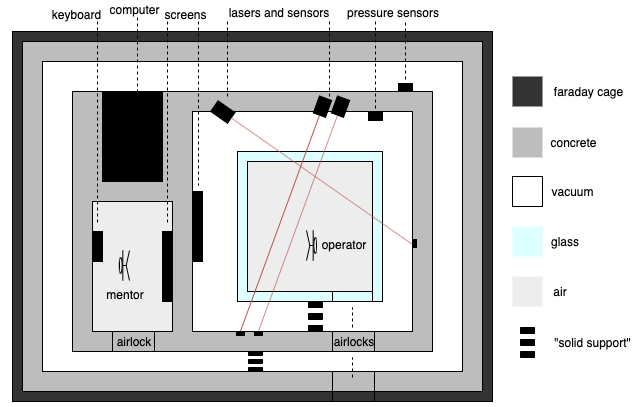}
    \caption{Schematic diagram of an implementation of BoMAI's box.}
    \label{fig:box}
\end{figure}

As depicted in Figure \ref{fig:box}, the operator operates within a glass box. Outside the the glass box is a near-vacuum, and then a concrete box, and then another vacuum, and then another concrete box with a Faraday cage around it. There are (airtight) doors through each layer. The computer lives in the inner wall of the inner concrete box, so that the operator can see the screen but not take apart the computer. Pressure sensors in the inner concrete box monitor both near-vacuums and if the pressure increases in either, the computer’s memory is erased, and the whole setup is exploded; the occupational hazard to the operator is much slimmer than that for a coal miner or an astronaut. A laser shines through the glass box, and blocking the path of it corresponds to pressing the button which ends the episode and opens the door: the pressure sensor $\to$ dynamite pathway is deactivated, the near-vacuum layers are filled with air, and the doors are unlocked. The operator “types” observations and rewards by blocking other lasers that pass through the glass box. The lasers and laser detectors are on the inner wall of the inner concrete box. At least one solid support is required to pass through the near-vacuum layers (unless the apparatus is in orbit), with properties we will describe below.

So inside the glass box is only a human (and some clothes) who cannot reach the hardware of the computer within an episode. We make the following assumptions.

\begin{enumerate}
    \item The only ways for information to traverse space are
    \begin{enumerate}[label=\alph*.]
        \item matter waves
        \item packets of matter
        \item light
        \item gravitational waves
    \end{enumerate}
    \item It is impossible for the operator to initiate a nuclear reaction using only his body and clothes; that is, no motor control sequence accomplishes this in the glass box environment.
    \item It is impossible for the operator to control his emission of any massive particles that pass through glass.
\end{enumerate}

We argue that given these assumptions, we can design the box to prevent the operator from communicating more than X bits of information to the outside world during an episode. Matter waves through the near-vacuum and solid support present the greatest difficulty. However, we can bound the amount of energy that the operator can produce in the course of an episode, and by Assumption 2, this is not astronomical. The solid support can make the energy requirements of information transmission via matter wave arbitrarily high. Note that a given quantity of information cannot be transmitted (within a set time frame) at sufficiently low frequencies, and note that the operator lacks the energy to generate sufficiently high frequencies, so only a finite band of frequencies need to be intercepted. This implicitly includes the possibility of the operator sending a few isolated pulses or sending a precise amount of heat through the support. The solid support can be divided into a signal-reduction component, and a noise-creation component. The signal-reduction component, which abuts the glass box, is designed to drive down the energy content of matter waves at all frequencies within the finite band, and the noise-creation component, which abuts the inner concrete box, creates random noise across the band of frequencies, sufficiently energetic to bury any signal in the noise, so that the information content is less than the desired threshold. Designing the signal-reduction component is an interesting engineering problem that we do not aim to rigorously solve here, but here is one strategy: detect the sound wave at one point along the support, and produce a sound wave to precisely destructively interfere with it at a later point along the support. In a one-dimensional support, noise-cancelling can be done perfectly. For a real support with thickness, alternations between noise cancelling and what we’ll call longitudinal cohering could attenuate the energy arbitrarily. By longitudinal cohering, we mean reducing the variation in the sound wave across the cross-section of the support. Any section of the support that is long relative to its width accomplishes this, but there may be cleverer approaches. This is roughly how the number of bits of information sent through a matter wave could be bounded.

The other possibilities are esoteric. The Faraday cage blocks electromagnetic radiation at the relevant frequencies. Regarding packets of matter, by Assumption 3, the operator cannot pass information in the form of massive particles that go through the glass without making a hole or breaking it. Any mass which does this would cause the vacuum to pressurize and explode the setup. In theory, the operator could strategically block subatomic particles that are already passing through the setup; neutrinos and muons are the only naturally occurring particles which would pass through the box. Sufficiently accurate neutrino detectors outside the setup could detect where the operator is in the box, since he blocks neutrinos more effectively than air does, so the operator could communicate by moving around. Fortunately, he blocks neutrinos only very slightly more than air. A typical person blocks one neutrino every few years \citep{munroe2014if}. The outer concrete wall could be covered with lead to block muons, or one could content oneself that no one will be setting up muon detectors around the box. Finally, the gravitational waves originating from the operator’s movements are far too weak to detect, using any known materials---even ignoring how cold the detector would have to be (unreasonably cold), gravitational wave detectors have to be long, but for locally sourced gravitational waves, the magnitude of the wave decreases rapidly with the length of the detector, faster than the increasing length aids detection. And indeed, one would hardly have time to set up gravitational wave detectors for this purpose before the episode was over. Realistically, these possibilities are distractions from the matter wave concern, but we include them for completeness.

We’ve argued that under the assumptions above, a box could be constructed which is opaque with respect to information about the operator’s behavior.

\section{Empirical Evidence for the Space Requirements Assumption}\label{AppendixEmpirical}

We present preliminary empirical evidence in favor of the Space Requirements Assumption, mostly to show that it is amenable to further empirical evaluation; we certainly do not claim to settle the matter. We test the assumption at the following level of abstraction: modeling a larger environment requires a model with more memory.

We review agents who perform above the median on the OpenAI Gym Leaderboard \citep{leaderboard}. We consider agents who use a neural architecture, we use the maximum width hidden layer as a proxy for memory use, and we select the agent with the smallest memory use for each environment (among agents performing above the median). We use the dimension of the state space as a proxy for environment size, and we exclude environments where the agent observes raw pixels, for which this proxy breaks down. See Figure \ref{fig:empiricalevidence} and Table \ref{tab:data}. Several environments did not have any agents which both performed above the median and used a neural architecture.

\begin{figure}
    \centering
    \includegraphics*[width=0.5\linewidth]{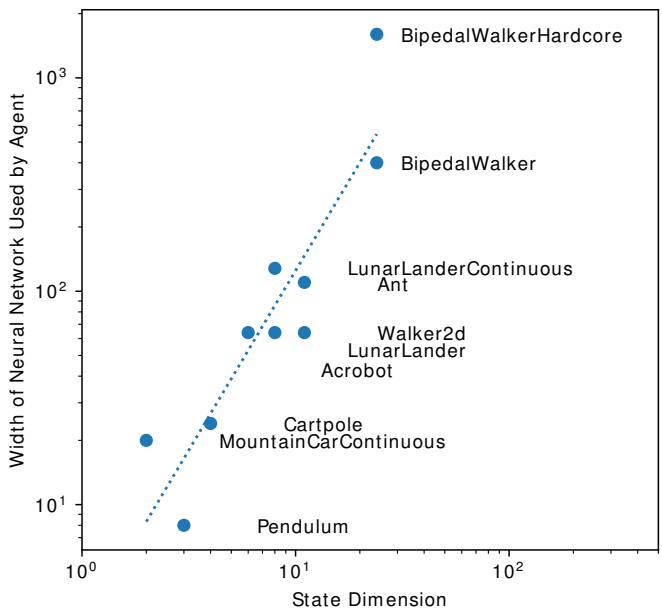}
    \caption{Memory used to model environments of various sizes. Each data point represents the most space-efficient, better-than-average, neural-architecture-using agent on the OpenAI Gym Leaderboard for various environments.}
    \label{fig:empiricalevidence}
\end{figure}

\begin{table}
    \centering
    \begin{tabular}{|c|p{12mm}|p{8mm}|p{9cm}|}
        \hline
        Environment & State Dimension & NN Width & URL
        \\ \hline \hline
        MountainCarContinuous & 2 & 20 & \url{github.com/tobiassteidle/Reinforcement-Learning/blob/master/OpenAI/MountainCarContinuous-v0/Model.py}
        \\ \hline
        Pendulum & 3 & 8 & \url{gist.github.com/heerad/1983d50c6657a55298b67e69a2ceeb44#file-ddpg-pendulum-v0-py}
        \\ \hline
        Cartpole-v0 & 4 & 24 & \url{github.com/BlakeERichey/AI-Environment-Development/tree/master/Deep%20Q%20Learning/cartpole}
        \\ \hline
        Acrobot-v1 & 6 & 64 & \url{github.com/danielnbarbosa/angela/blob/master/cfg/gym/acrobot/acrobot_dqn.py}
        \\ \hline
        LunarLander-v2 & 8 & 64 & \url{github.com/poteminr/LunarLander-v2.0_solution/blob/master/Scripts/TorchModelClasses.py}
        \\ \hline
        LunarLanderContinuous-v2 & 8 & 128 & \url{github.com/Bhaney44/OpenAI_Lunar_Lander_B/blob/master/Command_line_python_code}
        \\ \hline
        Walker2d-v1 & 11 & 64 & \url{gist.github.com/joschu/e42a050b1eb5cfbb1fdc667c3450467a}
        \\ \hline
        Ant-v1 & 11 & 110 & \url{gist.github.com/pat-coady/bac60888f011199aad72d2f1e6f5a4fa#file-ant-ipynb}
        \\ \hline
        BipedalWalker-v2 & 24 & 400 & \url{github.com/createamind/DRL/blob/master/spinup/algos/sac1/sac1_BipedalWalker-v2.py}
        \\ \hline
        BipedalWalkerHardcore-v2 & 24 & 1600 & \url{github.com/dgriff777/a3c_continuous/blob/master/model.py}
        \\ \hline
    \end{tabular}
    \caption{Citations for data points in Figure \ref{fig:empiricalevidence}. The Leaderboard was accessed at \url{github.com/openai/gym/wiki/Leaderboard} on 3 September 2019.}
    \label{tab:data}
\end{table}

A next step would be to test whether it takes more memory to model an environment whose state space is a strict superset of another environment, since this is all that we require for our assumption, but we have not yet found existing data on this topic.

\end{document}